\renewenvironment{abstract}
  {{\centering\large\bfseries Abstract\par}\vspace{0.7ex}%
    \bgroup
       \leftskip 20pt\rightskip 20pt\small\noindent\ignorespaces}%
  {\par\egroup\vskip 0.25ex}
\newenvironment{keywords}
{\vspace{0.05in}\bgroup\leftskip 20pt\rightskip 20pt \small\noindent{\bfseries
Keywords:} \ignorespaces}%
{\par\egroup\vskip 0.25ex}
\theoremstyle{plain}
\newtheorem{theorem}{Theorem}
\newtheorem{lemma}[theorem]{Lemma}
\newtheorem{proposition}[theorem]{Proposition}
\newtheorem{corollary}[theorem]{Corollary}
\theoremstyle{definition}
\newtheorem{definition}[theorem]{Definition}
\newtheorem{remark}[theorem]{Remark}
\numberwithin{equation}{section}
\newcommand{\defeq}{:=}
\renewcommand{\(}{\left(}
\renewcommand{\)}{\right)}
\DeclareMathOperator*{\argmin}{arg\,min}
\DeclareMathOperator*{\E}{\mathbb{E}}
\newcommand{\st}{\mathrm{s.t.}}
\newcommand{\bA}{\boldsymbol{A}}
\newcommand{\bM}{\boldsymbol{M}}
\newcommand{\bI}{\boldsymbol{I}}
\newcommand{\bSigma}{\boldsymbol{\Sigma}}
\newcommand{\bx}{\boldsymbol{x}}
\newcommand{\bw}{\boldsymbol{w}}
\newcommand{\by}{\boldsymbol{y}}
\newcommand{\be}{\boldsymbol{e}}
\newcommand{\br}{\boldsymbol{r}}
\newcommand{\bu}{\boldsymbol{u}}
\newcommand{\bv}{\boldsymbol{v}}
\newcommand{\ba}{\boldsymbol{a}}
\newcommand{\bb}{\boldsymbol{b}}
\newcommand{\R}{\mathbb{R}}
\newcommand{\Rn}{\mathbb{R}^n}
\newcommand{\Rd}{\mathbb{R}^d}
\newcommand{\Rnd}{\mathbb{R}^{n\times d}}
\newcommand{\zeronorm}[1]{\left\lVert #1 \right\rVert_{0}}
\newcommand{\twonorm}[1]{\left\lVert #1 \right\rVert_{2}}
\newcommand{\abs}[1]{\left\lvert #1 \right\rvert}
\newcommand{\fractwo}[1]{\frac{#1}{2}}
\newcommand{\supp}[1]{\mathrm{supp}\(#1\)}
\newcommand{\trans}{^{\top}}
\newcommand{\inner}[2]{\left\langle #1, #2 \right\rangle}
\newcommand{\const}{\mathrm{{C}}}
\newcommand{\PO}[1]{\mathcal{P}_{\Omega}\(#1\)}
\newcommand{\Hk}[1]{\mathcal{H}_{k}\(#1\)}
\newcommand{\order}[1]{\mathcal{O}\( #1 \)}
\newcommand{\xtilde}{\widetilde{\bx}}
\newcommand{\xhat}{\widehat{\bx}}
\newcommand{\xopt}{\bx_{\mathrm{opt}}}
\newcommand{\mutilde}{\widetilde{\boldsymbol{\mu}}}
\newcommand{\bphi}{\boldsymbol{\phi}}
\newcommand{\bPhi}{\boldsymbol{\Phi}}
\newcommand{\beps}{\boldsymbol{\varepsilon}}
\newcommand{\bzero}{\boldsymbol{0}}
\title{A Tight Bound of Hard Thresholding}
\author{
{\bf Jie Shen}\\
Rutgers University\\
Piscataway, NJ 08854, USA\\
\texttt{js2007@rutgers.edu}
\and
{\bf Ping Li}\\
Baidu Research \\
Bellevue, WA 98004, USA\\
\texttt{pingli98@gmail.com}
}
\date{}
\begin{document}
\maketitle

\begin{abstract}
This paper is concerned with the hard thresholding operator which sets all but the $k$ largest absolute elements of a vector to zero. We establish a {\em tight} bound to quantitatively characterize the deviation of the thresholded solution from a given signal. Our theoretical result is universal in the sense that it holds for all choices of parameters, and the underlying analysis depends only on fundamental arguments in mathematical optimization. We discuss the implications for two domains:

\vspace{0.05in}
\noindent {\bf Compressed Sensing.} \
On account of the crucial estimate, we bridge the connection between the restricted isometry property (RIP) and the sparsity parameter for a vast volume of hard thresholding based algorithms, which renders an improvement on the RIP condition especially when the true sparsity is unknown. This suggests that in essence, many more kinds of sensing matrices or fewer measurements are admissible for the data acquisition procedure.

\vspace{0.05in}
\noindent {\bf Machine Learning.} \
In terms of large-scale machine learning, a significant yet challenging problem is learning accurate sparse models in an efficient manner. In stark contrast to prior work that attempted the $\ell_1$-relaxation for promoting sparsity, we present a novel stochastic algorithm which performs hard thresholding in each iteration, hence ensuring such parsimonious solutions. Equipped with the developed bound, we prove the {\em global linear convergence} for a number of prevalent statistical models under mild assumptions, even though the problem turns out to be non-convex.

\end{abstract}

\begin{keywords}
sparsity, hard thresholding, compressed sensing, stochastic optimization
\end{keywords}

\section{Introduction}\label{sec:intro}
Over the last two decades, pursuing sparse representations has emerged as a fundamental technique throughout bioinformatics~\cite{olshausen1997sparse}, statistics~\cite{tibshirani1996regression,efron2004least}, signal processing~\cite{chen1998atomic,donoho2006stable,donoho2006compressed,candes2008introduction} and mathematical science~\cite{chandrasekaran2012convex}, to name just a few. In order to obtain a sparse solution, a plethora of practical algorithms have been presented, among which two prominent examples are greedy pursuit and convex relaxation~\cite{tropp2010computational}. For instance, as one of the earliest greedy algorithms, orthogonal matching pursuit~(OMP)~\cite{pati1993orthogonal} repeatedly picks a coordinate as the potential support of a solution. While OMP may fail for some deterministic sensing matrices, \cite{tropp2004greed,tropp2007signal} showed that it recovers the true signal with high probability when using random matrices such as Gaussian. Inspired by the success of OMP, the two concurrent work of compressive sampling matching pursuit~(CoSaMP)~\cite{needell2009cosamp} and subspace pursuit~(SP)~\cite{dai2009subspace} made improvement by selecting multiple coordinates  followed by a pruning step in each iteration, and the recovery condition was framed under the restricted isometry property~(RIP)~\cite{candes2005decoding}. Interestingly, the more careful selection strategy of CoSaMP and SP leads to an optimal sample complexity. The iterative hard thresholding~(IHT) algorithm~\cite{daubechies2004iterative,blumensath2008iterative,blumensath2009iterative} gradually refines the iterates by gradient descent along with truncation. \cite{foucart2011hard} then developed a concise algorithm termed hard thresholding pursuit~(HTP), which combined the idea of CoSaMP and IHT, and showed that HTP is superior to both in terms of the RIP condition. \cite{jain2011orthogonal} proposed an interesting variant of the HTP algorithm and obtained a sharper RIP result. Recently,~\cite{bahmani2013greedy} and~\cite{yuan2018gradient} respectively extended CoSaMP and HTP to general objective functions, for which a global convergence was established.

Since the sparsity constraint counts the number of non-zero components which renders the problem non-convex, the $\ell_1$-norm was suggested as a convex relaxation dating back to basis pursuit~\cite{chen1998atomic,donoho2008fast} and Lasso~\cite{tibshirani1996regression}. The difference is that Lasso looks for an $\ell_1$-norm constrained solution that minimizes the residual while the principle of basis pursuit is to find a signal with minimal $\ell_1$-norm that fits the observation data. \cite{candes2005decoding} carried out a detailed analysis on the recovery performance of basis pursuit. Another popular estimator in the high-dimensional statistics is the Dantzig selector~\cite{candes2007dantzig} which, instead of constraining the residual of the linear model, penalizes the maximum magnitude of the gradient. From a computational perspective, both basis pursuit and Dantzig selector can be solved by linear programming, while Lasso is formulated as a quadratic problem. Interestingly, under the RIP condition or the uniform uncertainty assumption~\cite{candes2006robust}, a series of work showed that exact recovery by convex programs is possible as soon as the observation noise vanishes~\cite{candes2005decoding,candes2008restricted,wainwright2009sharp,cai2010new,foucart2012sparse}.

In this paper, we are interested in the hard thresholding~(HT) operator underlying a large body of the developed algorithms in compressed sensing~(e.g., IHT, CoSaMP, SP), machine learning~\cite{yuan2013truncated}, and statistics~\cite{ma2013sparse}. Our motivation is two-fold. From a high level, compared to the convex programs, these HT-based algorithms are always orders of magnitude computationally more efficient, hence more practical for large-scale problems~\cite{tropp2010computational}. Nevertheless, they usually require a more stringent condition to guarantee the success. This naturally raises an interesting question of whether we can derive milder conditions for HT-based algorithms to achieve the best of the two worlds. For practitioners, to address the huge volume of data, a popular strategy in machine learning is to appeal to stochastic algorithms that sequentially update the solution. However, as many researchers observed~\cite{langford2009sparse,duchi2009fobos,xiao2010dual}, it is hard for the $\ell_1$-based stochastic algorithms to preserve the sparse structure of the solution as the batch solvers do. This immediately poses the question of whether we are able to apply the principal idea of hard thresholding to stochastic algorithms while still ensuring a fast convergence.

To elaborate the problem more precisely, let us first turn to some basic properties of hard thresholding along with simple yet illustrative cases. For a general vector $\bb \in \Rd$, the hard thresholded signal $\Hk{\bb}$ is formed by setting all but the largest~(in magnitude) $k$ elements of $\bb$ to zero. Ties are broken lexicographically. Hence, the hard thresholded signal $\Hk{\bb}$ is always $k$-{sparse}, i.e., the number of non-zero components does not exceed $k$. Moreover, the resultant signal $\Hk{\bb}$ is a best $k$-sparse approximation to $\bb$ in terms of any $\ell_p$ norm~($p \geq 1$). That is, for any $k$-sparse vector $\bx$
\begin{equation*}
\lVert{\Hk{\bb} - \bb}\rVert_p \leq \lVert{\bx - \bb}\rVert_p.
\end{equation*}
In view of the above inequality, a broadly used bound in the literature for the deviation of the thresholded signal is as follows:
\begin{equation}\label{eq:old}
\twonorm{\Hk{\bb} - \bx} \leq 2\twonorm{\bb - \bx}.
\end{equation}
To gain intuition on the utility of~\eqref{eq:old} and to spell out the importance of offering a tight bound for it, let us consider the compressed sensing problem as an example for which we aim to recover the true sparse signal $\bx$ from its linear measurements. Here, $\bb$ is a good but dense approximation to $\bx$ obtained by, e.g., full gradient descent. Then~\eqref{eq:old} justifies that in order to obtain a structured (i.e., sparse) approximation by hard thresholding, the distance of the iterate to the true signal $\bx$ is upper bounded by a multiple of $2$ to the one before.  For comparison, it is worth mentioning that $\ell_1$-based convex algorithms usually utilize the soft thresholding operator which enjoys the non-expansiveness property~\cite{saga}, i.e., the iterate becomes closer to the optimum after projection. This salient feature might partially attribute to the wide range of applications of the $\ell_1$-regularized formulations. Hence, to derive comparable performance guarantee, tightening the bound~\eqref{eq:old} is crucial in that it controls how much deviation the hard thresholding operator induces. This turns out to be more demanding for {stochastic gradient methods}, where the proxy $\bb$ itself is affected by the randomness of sample realization. In other words, since $\bb$ does not minimize the objective function (it only optimizes the objective in expectation), the deviation~\eqref{eq:old} makes it more challenging to analyze the convergence behavior. As an example,~\cite{nguyen2014linear} proposed a stochastic solver for general sparsity-constrained programs but suffered a non-vanishing optimization error due to randomness. This indicates that to mitigate the randomness barrier, we have to seek a better bound to control the precision of the thresholded solution and the variance.

\subsection{Summary of Contributions}
In this work, we make three contributions:
\begin{enumerate}
\item We examine the tightness of~\eqref{eq:old} that has been used for a decade in the literature and show that the equality therein will never be attained. We then improve this bound and quantitatively characterize that the deviation is inversely proportional to the value of $\sqrt{k}$. Our bound is tight, in the sense that the equality we build can be attained for specific signals, hence cannot be improved if no additional information is available. Our bound is universal in the sense that it holds for all choices of $k$-sparse signals $\bx$ and for general signals $\bb$.

\item Owing to the tight estimate, we demonstrate how the RIP~(or RIP-like) condition assumed by a wide range of hard thresholding based algorithms can be relaxed. In the context of compressed sensing, it means that in essence, many more kinds of sensing matrices or fewer measurements can be utilized for data acquisition. For machine learning, it suggests that existing algorithms are capable of handling more difficult statistical models.

\item Finally, we present an computationally efficient algorithm that applies hard thresholding in large-scale setting and we prove its linear convergence to a global optimum up to the statistical precision of the problem. We also prove that with sufficient samples, our algorithm identifies the true parameter for prevalent statistical models. Returning to~\eqref{eq:old}, our analysis shows that only when the deviation is controlled below the multiple of $1.15$ can such an algorithm succeed. This immediately implies that the conventional bound~\eqref{eq:old} is not applicable in the challenging scenario.
\end{enumerate}

\subsection{Notation}
Before delivering the algorithm and main theoretical results, let us instate several pieces of notation that are involved throughout the paper. We use bold lowercase letters, e.g., $\bv$, to denote a vector (either column or row) and its $i$th element is denoted by $v_i$. The $\ell_2$-norm of a vector $\bv$ is denoted by $\twonorm{\bv}$. The support set of $\bv$, i.e., indices of non-zeros, is denoted by $\supp{\bv}$ whose cardinality is written as $\abs{\supp{\bv}}$ or $\zeronorm{\bv}$. We write bold capital letters such as $\bM$ for matrices and its $(i, j)$-th entry is denoted by $m_{ij}$. The capital upright letter $\const$ and its subscript variants~(e.g., $\const_0, \const_1$) are reserved for absolute constants whose values may change from appearance to appearance.

For an integer $d > 0$, suppose that $\Omega$ is a subset of $\{1,\ 2,\ \dots,\ d\}$. Then for a general vector $\bv \in \Rd$, we define $\PO{\cdot}$ as the orthogonal projection onto the support set $\Omega$ which retains elements contained in $\Omega$ and sets others to zero. That is,
\begin{align*}
\(\PO{\bv} \)_i =\begin{cases}
v_i,\ &\text{if}\ i \in \Omega,\\
0,\ &\text{otherwise}.
\end{cases}
\end{align*}
In particular, let $\Gamma$ be the support set indexing the $k$ largest absolute components of $\bv$. In this way, the hard thresholding operator is given by
\begin{equation*}
\Hk{\bv} = \mathcal{P}_{\Gamma}(\bv).
\end{equation*}
We will also use the orthogonal projection of a vector $\bv$ onto an $\ell_2$-ball with radius $\omega$. That is,
\begin{equation*}
\Pi_{\omega}(\bv)=\frac{ \bv}{ \max\{ 1, \twonorm{\bv} / \omega \}}.
\end{equation*}

\subsection{Roadmap}
We present the key tight bound for hard thresholding in Section~\ref{sec:key}, along with a justification why the conventional bound~\eqref{eq:old} is not tight. We then discuss the implications of the developed tight bound to compressed sensing and machine learning in Section~\ref{sec:imp}, which shows that the RIP or RIP-like condition can be improved for a number of popular algorithms. Thanks to our new estimation, Section~\ref{sec:alg} develops a novel stochastic algorithm which applies hard thresholding to large-scale problems and establishes the global linear convergence. A comprehensive empirical study on the tasks of sparse recovery and binary classification is carried out in Section~\ref{sec:exp}. Finally, We conclude the paper in Section~\ref{sec:con} and all the proofs are deferred to the appendix.

\section{The Key Bound}\label{sec:key}
We argue that the conventional bound~\eqref{eq:old} is not tight, in the sense that the equality therein can hardly be attained. To see this, recall how the bound was derived for a $k$-sparse signal $\bx$ and a general one $\bb$:
\begin{equation*}
\twonorm{\Hk{\bb} - \bx} = \twonorm{ \Hk{\bb} - \bb + \bb - \bx} \stackrel{\xi}{\leq} \twonorm{\Hk{\bb} - \bb} + \twonorm{\bb - \bx} \leq 2 \twonorm{\bb - \bx},
\end{equation*}
where the last inequality holds because $\Hk{\bb}$ is a best $k$-sparse approximation to $\bb$. The major issue occurs in $\xi$. Though it is the well-known triangle inequality and the equality could be attained if there is no restriction on the signals $\bx$ and $\bb$, we remind here that the signal $\bx$ does have a specific structure~--~it is $k$-sparse. Note that in order to fulfill the equality in $\xi$, we must have $\Hk{\bb} - \bb = \gamma(\bb - \bx)$ for some $\gamma \geq 0$, that is,
\begin{equation}\label{eq:key1}
\Hk{\bb} = (\gamma+1)\bb - \gamma \bx.
\end{equation}
One may verify that the above equality holds {\em if and only if}
\begin{equation}\label{eq:key2}
\bx = \bb = \Hk{\bb}.
\end{equation}
To see this, let $\Omega$ be the support set of $\Hk{\bb}$ and $\overline{\Omega}$ be the complement. Let $\bb_1 = \PO{\bb} = \Hk{\bb}$ and $\bb_2 = \mathcal{P}_{\overline{\Omega}}(\bb)$. Likewise, we define $\bx_1$ and $\bx_2$ as the components of $\bx$ supported on $\Omega$ and $\overline{\Omega}$ respectively. Hence, \eqref{eq:key1} indicates $\bx_1 = \bb_1$ and $\bx_2 = (1+ \gamma^{-1})\bb_2$ where we assume $\gamma > 0$ since $\gamma=0$ immediately implies $\Hk{\bb} = \bb$ and hence the equality of~\eqref{eq:old} does not hold. If $\zeronorm{\bb_1} < k$, then we have $\bx_2 = \bb_2 = \bzero$ since $\bb_1$ contains the $k$ largest absolute elements of $\bb$. Otherwise, the fact that $\zeronorm{\bx} \leq k$ and $\bx_1 = \bb_1$ implies $\bx_2 = \bzero$, and hence $\bb_2$. Therefore, we obtain~\eqref{eq:key2}.

When~\eqref{eq:key2} happens, however, we in reality have $\twonorm{\Hk{\bb} - \bx} = \twonorm{\bb - \bx} = 0$. In other words, the factor of $2$ in~\eqref{eq:old} can essentially be replaced with an {\em arbitrary constant}! In this sense, we conclude that the bound~\eqref{eq:old} is not tight. Our new estimate for hard thresholding is as follows:
\begin{theorem}[Tight Bound for Hard Thresholding]\label{thm:key}
Let $\bb \in \Rd$ be an arbitrary vector and $\bx \in \Rd$ be any $K$-sparse signal. For any $k \geq K$, we have the following bound:
\begin{equation*}
	\twonorm{\Hk{\bb} - \bx} \leq \sqrt{\nu} \twonorm{\bb - \bx},\quad \nu = 1 + \frac{\rho + \sqrt{\(4 + \rho \) \rho } }{2}, \quad \rho =  \frac{\min\{K, d-k\}}{k - K + \min\{K, d-k\}}.
\end{equation*}
In particular, our bound is tight in the sense that there exist specific vectors of $\bb$ and $\bx$ such that the equality holds.
\end{theorem}
\begin{remark}[Maximum of $\nu$]
In contrast to the constant bound~\eqref{eq:old}, our result asserts that the deviation resulting from hard thresholding is inversely proportional to $\sqrt{k}$~(when $K \leq d-k$) in a universal manner. When $k$ tends to $d$, $\rho$ is given by $(d-k)/(d-K)$ which is still decreasing with respect to $k$. Thus, the maximum value of $\rho$ equals one. Even in this case, we find that $\sqrt{\nu_{\max}} = \sqrt{1 + \frac{\sqrt{5} + 1}{2}} = \frac{\sqrt{5} + 1}{2} \approx 1.618$.
\end{remark}
\begin{remark}
Though for some batch algorithms such as IHT and CoSaMP, the constant bound~\eqref{eq:old} suffices to establish the convergence due to specific conditions, we show in Section~\ref{sec:alg} that it cannot ensure the global convergence for stochastic algorithms.
\end{remark}
\begin{remark}
{When $\bx$ is not exactly $K$-sparse, we still can bound the error by  $\twonorm{\Hk{\bb} - \bx} \leq\twonorm{\Hk{\bb} - \Hk{\bx}}+ \twonorm{\Hk{\bx} - \bx}$. Thus, without loss of generality, we assumed that the signal $\bx$ is $K$-sparse.}
\end{remark}
\begin{proof}(Sketch) Our bound follows from fully exploring the sparsity pattern of the signals and from fundamental arguments in optimization. Denote
\begin{equation*}
\bw \defeq \Hk{\bb}.
\end{equation*}
Let $\Omega$ be the support set of $\bw$ and let $\overline{\Omega}$ be its complement. We immediately have $\PO{\bb} = \bw$. Let $\Omega'$ be the support set of $\bx$. Define
\begin{equation*}
\bb_1 = \mathcal{P}_{\Omega \backslash \Omega'}\( \bb \),\quad \bb_2 = \mathcal{P}_{\Omega \cap \Omega'}\( \bb \),\quad \bb_3 = \mathcal{P}_{\overline{\Omega} \backslash \Omega'}\( \bb \),\quad \bb_4 = \mathcal{P}_{\overline{\Omega} \cap \Omega'}\( \bb \).
\end{equation*}
Likewise, we define $\bx_i$ and $\bw_i$ for $1 \leq i \leq 4$. Due to the construction, we have $\bw_1 = \bb_1, \bw_2 = \bb_2, \bw_3 = \bw_4 = \bx_1 = \bx_3 = \bzero$. Our goal is to estimate the maximum value of $\twonorm{\bw - \bx}^2 / \twonorm{\bb - \bx}^2$. It is easy to show that when attaining the maximum, $\twonorm{\bb_3}$ must be zero. Denote
\begin{equation}\label{eq:tmp_t}
\gamma \defeq \frac{\twonorm{\bw - \bx}^2}{\twonorm{\bb - \bx}^2} =  \frac{\twonorm{\bb_1}^2 + \twonorm{\bb_2 - \bx_2}^2 + \twonorm{\bx_4}^2}{\twonorm{\bb_1}^2 + \twonorm{\bb_2 - \bx_2}^2  + \twonorm{\bb_4 - \bx_4}^2}.
\end{equation}
Note that the variables here only involve $\bx$ and $\bb$. Arranging the equation we obtain
\begin{equation}\label{eq:key3}
(\gamma-1) \twonorm{\bb_2 - \bx_2}^2 + \gamma \twonorm{\bb_4 - \bx_4}^2 - \twonorm{\bx_4}^2 + (\gamma-1) \twonorm{\bb_1}^2 = 0.
\end{equation}
It is evident that for specific choices of $\bb$ and $\bx$, we have $\gamma = 1$. Since we are interested in the maximum of $\gamma$, we assume $\gamma > 1$ below. Fixing $\bb$, we can view the left-hand side of the above equation as a function of $\bx$. One may verify that the function has a positive definite Hessian matrix and thus it attains the minimum at stationary point given by
\begin{equation}\label{eq:tmp_stp}
\bx_2^* = \bb_2,\quad \bx_4^* = \frac{\gamma}{\gamma-1} \bb_4.
\end{equation}
On the other hand,~\eqref{eq:key3} implies that the minimum function value should not be greater than zero. Plugging the stationary point back gives
\begin{equation*}
\twonorm{\bb_1}^2 \gamma^2 - (2\twonorm{\bb_1}^2 + \twonorm{\bb_4}^2) \gamma + \twonorm{\bb_1}^2 \leq 0.
\end{equation*}
Solving the above inequality with respect to $\gamma$, we obtain
\begin{equation}\label{eq:tmp_test}
\gamma \leq 1 + {\(2\twonorm{\bb_1}^2\)}^{-1}{\(\twonorm{\bb_4}^2 + \sqrt{\(4\twonorm{\bb_1}^2+\twonorm{\bb_4}^2\) \twonorm{\bb_4}^2}\)}.
\end{equation}
To derive an upper bound that is uniform over the choice of $\bb$, we recall that $\bb_1$ contains the largest absolute elements of $\bb$ while $\bb_4$ has smaller values. In particular, the average in $\bb_1$ is larger than that in $\bb_4$, which gives
\begin{equation*}
{\twonorm{\bb_4}^2}/{\zeronorm{\bb_4}} \leq {\twonorm{\bb_1}^2}/{\zeronorm{\bb_1}}.
\end{equation*}
Note that $\zeronorm{\bb_1} = k - \zeronorm{\bb_2} = k - (K - \zeronorm{\bb_4})$. Hence, combining with the fact that $0 \leq \zeronorm{\bb_4} \leq \min\{K, d-k\}$ and optimizing over $\zeronorm{\bb_4}$ in the above inequality gives
\begin{equation}\label{eq:tmp_b4}
\twonorm{\bb_4}^2 \leq \frac{\min\{K, d-k\}}{k - K + \min\{K, d-k\}} \twonorm{\bb_1}^2.
\end{equation}
Finally, we arrive at a uniform upper bound
\begin{equation*}
\gamma \leq 1 + \frac{\rho + \sqrt{\(4 + \rho \) \rho } }{2},\quad \rho =  \frac{\min\{K, d-k\}}{k - K + \min\{K, d-k\}}.
\end{equation*}
See Appendix~\ref{sec:app:proofkey} for the full proof.
\end{proof}
\begin{remark}[Tightness]
We construct proper vectors $\bb$ and $\bx$ to establish the tightness of our bound by a backward induction. Note that $\gamma$ equals $\nu$ if and only if $\twonorm{\bb_4}^2 = \rho \twonorm{\bb_1}^2$. Hence, we pick
\begin{equation}\label{eq:tmp_choice}
\twonorm{\bb_4}^2 = \rho \twonorm{\bb_1}^2,\quad \bx_2 = \bb_2,\quad \bx_4 = \frac{\nu}{\nu - 1} \bb_4,
\end{equation}
where $\bx_2$ and $\bx_4$ are actually chosen as the stationary point as in~\eqref{eq:tmp_stp}. We note that the quantity of $\nu$ only depends on $d$, $k$ and $K$, not on the components of $\bb$ or $\bx$. Plugging the above back to~\eqref{eq:tmp_t} justifies $\gamma = \nu$.

It remains to show that our choices in~\eqref{eq:tmp_choice} do not violate the definition of $\bb_i$'s, i.e., we need to ensure that the elements in $\bb_1$ or $\bb_2$ are equal to or greater than those in $\bb_3$ or $\bb_4$. Note that there is no such constraint for the $K$-sparse vector $\bx$. Let us consider the case $K < d -k$ and $\zeronorm{\bb_4} = K$, so that $\zeronorm{\bb_1} = k$ and $\rho = K / k$. Thus, the first equality of~\eqref{eq:tmp_choice} holds as soon as all the entries of $\bb$ have same magnitude. The fact $\zeronorm{\bb_4} = K$ also implies $\Omega'$ is a subset of $\overline{\Omega}$ due to the definition of $\bb_4$ and the sparsity of $\bx$, hence we have $\bx_2 = \bzero = \bb_2$. Finally, picking $\bx_4$ as we did in~\eqref{eq:tmp_choice} completes the reasoning since it does not violate the sparsity constraint on $\bx$.
\end{remark}

{As we pointed out and just verified, the bound given by Theorem~\ref{thm:key} is tight. However, if there is additional information for the signals, a better bound can be established. For instance, let us further assume that the signal $\bb$ is $r$-sparse. If $r \leq k$, then $\bb_4$ is a zero vector and~\eqref{eq:tmp_test} reads as $\gamma \leq 1$. Otherwise, we have $\zeronorm{\bb_4} \leq \min\{ K, r - k \}$ and~\eqref{eq:tmp_b4} is improved to
\begin{equation*}
\twonorm{\bb_4}^2 \leq \frac{\min\{K, r-k\}}{k - K + \min\{K,  r-k\}} \twonorm{\bb_1}^2.
\end{equation*}
Henceforth, we can show that the parameter $\rho$ is given by
\begin{equation*}
\rho =  \frac{\min\{K, r-k\}}{k - K + \min\{K, r-k\}}.
\end{equation*}
Note that the fact $r \leq d$ implies that the above is a tighter bound than the one in Theorem~\ref{thm:key}.

We would also like to mention that in Lemma~1 of~\cite{jain2014iterative}, a closely related bound was established:
\begin{equation}\label{eq:jain_bound}
\twonorm{\Hk{\bb} - \bb} \leq \sqrt{\frac{d - k}{d - K}} \twonorm{\bb - \bx}.
\end{equation}
One may use this nice result to show that
\begin{equation}\label{eq:triangle_jain}
\twonorm{\Hk{\bb} - \bx} \leq \twonorm{\Hk{\bb} - \bb} + \twonorm{\bb - \bx} \leq \( 1 + \sqrt{\frac{d - k}{d - K}} \)\twonorm{\bb - \bx},
\end{equation}
which also improves on~\eqref{eq:old} provided $k > K$. However, one shortcoming of~\eqref{eq:triangle_jain} is that the factor depends on the dimension. For comparison, we recall that in the regime $K \leq d - k$, our bound is free of the dimension. This turns out to be a salient feature to integrate hard thresholding into stochastic methods, and we will comment on it more in Section~\ref{sec:alg}.}

\section{Implications to Compressed Sensing}\label{sec:imp}
In this section, we investigate the implications of Theorem~\ref{thm:key} for compressed sensing and signal processing.  Since most of the HT-based algorithms utilize the deviation bound~\eqref{eq:old} to derive the convergence condition, they can be improved by our new bound. We exemplify the power of our theorem on two popular algorithms: IHT~\cite{blumensath2009iterative} and CoSaMP \cite{needell2009cosamp}. We note that our analysis also applies to their extensions such as \cite{bahmani2013greedy}. {To be clear, the purpose of this section is not dedicated to improving the best RIP condition for which recovery is possible by any methods (either convex or non-convex). Rather, we focus on two broadly used greedy algorithms and illustrate how our bound improves on previous results.}

We proceed with a brief review of the problem setting in compressed sensing. Compressed sensing algorithms aim to recover the true $K$-sparse signal $\bx^* \in \Rd$ from a set of its (perhaps noisy) measurements
\begin{equation}\label{model:CS}
\by = \bA \bx^* + \beps,
\end{equation}
where $\beps \in \Rd$ is some observation noise and $\bA$ is a known $n \times d$ sensing matrix with $n \ll d$, hence the name compressive sampling. In general, the model is not identifiable since it is an under-determined system. Yet, the prior knowledge that $\bx^*$ is sparse radically changes the premise. That is, if the geometry of the sparse signal is preserved under the action of the sampling matrix $\bA$ for a restricted set of directions, then it is possible to invert the sampling process. Such a novel idea was quantified as the $k$th restricted isometry property of $\bA$ by~\cite{candes2005decoding}, which requires that there exists a constant $\delta \geq 0$, such that for all $k$-sparse signals $\bx$
\begin{equation}\label{eq:ric}
(1 - \delta)\twonorm{\bx}^2 \leq \twonorm{\bA \bx}^2 \leq (1+\delta)\twonorm{\bx}^2.
\end{equation}
The $k$th restricted isometry constant~(RIC) $\delta_k$ is then defined as the smallest one that satisfies the above inequalities. Note that $\delta_{2k} < 1$ is the minimum requirement for distinguishing all $k$-sparse signals from the measurements. This is because for two arbitrary $k$-sparse vectors $\bx_1$ and $\bx_2$ and their respective measurements $\by_1$ and $\by_2$, the RIP condition reads as
\begin{equation*}
(1-\delta_{2k}) \twonorm{\bx_1 - \bx_2}^2 \leq \twonorm{\by_1 - \by_2}^2 \leq (1+\delta_{2k}) \twonorm{\bx_1 - \bx_2}^2,
\end{equation*}
for which $\delta_{2k} < 1$ guarantees that $\bx_1 \neq \bx_2$ implies $\by_1 \neq \by_2$. To date, there are three quintessential examples known to exhibit a profound restricted isometry behavior as long as the number of measurements is large enough: Gaussian matrices~(optimal RIP, i.e., very small $\delta_k$), partial Fourier matrices~(fast computation) and Bernoulli ensembles~(low memory footprint). Notably, it was shown in recent work that random matrices with a heavy-tailed distribution also satisfy the RIP with overwhelming probability~\cite{adamczak2011restricted,li2014compressed}.

Equipped with the standard RIP condition, many efficient algorithms have been developed. A partial list includes $\ell_1$-norm based convex programs, IHT, CoSaMP, SP and regularized OMP \cite{needell2010signal}, along with much interesting work devoted to improving or sharpening the RIP condition~\cite{wang2012recovery,mo2012remark,cai2013sharp,mo2015sharp}. To see why relaxing RIP is of central interest, note that the standard result \cite{baraniuk2008simple} asserts that the RIP condition $\delta_k \leq \delta$ holds with high probability over the draw of $\bA$ provided
\begin{equation}\label{eq:rip n}
n \geq \const_0 \delta^{-2} k \log (d/k).
\end{equation}
Hence, a slight relaxation of the condition $\delta_k \leq \delta$ may dramatically decrease the number of measurements. {That being said, since the constant $\const_0$ above is unknown, in general one cannot tell the precise sample size for greedy algorithms. Estimating the constant is actually the theme of phase transition~\cite{donoho2010precise,donoho2013accurate}. While precise phase transition for $\ell_1$-based convex programs has been well understood~\cite{wainwright2009sharp}, an analogous result for greedy algorithms remains an open problem. Notably, in~\cite{blanchard2015performance}, phase transition for IHT/CoSaMP was derived using the constant bound~\eqref{eq:old}. We believe that our tight bound shall sharpen these results and we leave it as our future work. In the present paper, we focus on the ubiquitous RIP condition. In the language of RIP, we establish improved results.}

\subsection{Iterative Hard Thresholding}
The IHT algorithm recovers the underlying $K$-sparse signal $\bx^*$ by iteratively performing a full gradient descent on the least-squares loss followed by a hard thresholding step. That is, IHT starts with an arbitrary point $\bx^0$ and at the $t$-th iteration, it updates the new solution as follows:
\begin{equation}\label{eq:x_IHT}
\bx^{t} = \Hk{\bx^{t-1} + \bA\trans(\by - \bA \bx^{t-1})}.
\end{equation}
Note that~\cite{blumensath2009iterative} used the parameter $k = K$. However, in practice one may only know to an upper bound on the true sparsity $K$. Thus, we consider the projection sparsity $k$ as a parameter that depends on $K$. To establish the global convergence with a geometric rate of $0.5$, \cite{blumensath2009iterative} applied the bound~\eqref{eq:old} and assumed the RIP condition
\begin{equation}
\delta_{2k+K} \leq 0.18.
\end{equation}
As we have shown, \eqref{eq:old} is actually not tight and hence, their results, especially the RIP condition can be improved by Theorem~\ref{thm:key}.

\begin{theorem}\label{thm:iht}
Consider the model~\eqref{model:CS} and the IHT algorithm~\eqref{eq:x_IHT}. Pick $k \geq K$ and let $\{ \bx^t \}_{t\geq 1}$ be the iterates produced by IHT. Then, under the RIP condition $\delta_{ 2k+K} \leq 1/\sqrt{8\nu}$, for all $t \geq 1$
\begin{equation*}
\twonorm{\bx^t - \bx^*} \leq 0.5^t \twonorm{\bx^{0} - \bx^*} + \const \twonorm{\beps},
\end{equation*}
where $\nu$ is given by Theorem~\ref{thm:key}.
\end{theorem}

Let us first study the vanilla case $k = K$. \cite{blumensath2009iterative} required $\delta_{3K} \leq 0.18$ whereas our analysis shows $\delta_{3K} \leq 0.22$ suffices. Note that even a little relaxation on RIP is challenging and may require several pages of mathematical induction~\cite{candes2008restricted,cai2010new,foucart2012sparse}. In contrast, our improvement comes from a direct application of Theorem~\ref{thm:key} which only modifies several lines of the original proof in~\cite{blumensath2009iterative}. See Appendix~\ref{sec:app:proofimp} for details. In view of~\eqref{eq:rip n}, we find that the necessary number of measurements for IHT is dramatically reduced with a factor of $0.67$ by our new theorem in that the minimum requirement of $n$ is inversely proportional to the square of $\delta_{ 2k+K}$.

Another important consequence of the theorem is a characterization on the RIP condition and the sparsity parameter, which, to the best of our knowledge, has not been studied in the literature. In \cite{blumensath2009iterative}, when gradually tuning $k$ larger than $K$, it always requires $\delta_{2k+K} \leq 0.18$. Note that due to the monotonicity of RIC, i.e., $\delta_r \leq \delta_{r'}$ if $r \leq r'$, the condition turns out to be more and more stringent. Compared to their result, since $\nu$ is inversely proportional to $\sqrt{k}$, Theorem~\ref{thm:iht} is powerful especially when $k$ becomes larger. For example, suppose $k = 20 K$. In this case, Theorem~\ref{thm:iht} justifies that IHT admits the linear convergence as soon as $\delta_{41K} \leq 0.32$ whereas \cite{blumensath2009iterative} requires $\delta_{41K} \leq 0.18$. Such a property is appealing in practice, in that among various real-world applications, the true sparsity is indeed unknown and we would like to estimate a conservative upper bound on it.

On the other hand, for a {given} sensing matrix, there does exist a fundamental limit for the maximum choice of $k$. To be more precise, the condition in Theorem~\ref{thm:iht} together with the probabilistic argument~\eqref{eq:rip n} require
\begin{equation*}
1/\sqrt{8\nu} \geq \delta_{2k+K},\quad \const_1 \nu (2k+K) \log\(d/(2k+K)\) \leq n.
\end{equation*}
Although it could be very interesting to derive a quantitative characterization for the maximum value of $k$, we argue that it is perhaps intractable owing to two aspects: First, it is known that one has to enumerate all the combinations of the $2k+K$ columns of $\bA$ to  compute the restricted isometry constant $\delta_{2k+K}$~\cite{bah2010improved,bah2014bounds}. This suggests that it is NP-hard to estimate the largest admissible value of $k$. Also, there is no analytic solution of the stationary point for the left-hand side of the second inequality.

\subsection{Compressive Sampling Matching Pursuit}
The CoSaMP algorithm proposed by~\cite{needell2009cosamp} is one of the most efficient algorithms for sparse recovery. Let $F(\bx) = \twonorm{\by - \bA \bx}^2$. CoSaMP starts from an arbitrary initial point $\bx^0$ and proceeds as follows:
\begin{align*}
\Omega^t &= \supp{\nabla F(\bx^{t-1}),\ k} \cup \supp{\bx^{t-1}},\\
\bb^t &= \argmin_{\bx}\ F(\bx),\ \st\ \supp{\bx} \subset \Omega^t,\\
\bx^t &= \Hk{\bb^t}.
\end{align*}
Compared to IHT which performs hard thresholding after gradient update, CoSaMP prunes the gradient at the beginning of each iteration, followed by solving a least-squares program restricted on a small support set. In particular, in the last step, CoSaMP applies hard thresholding to form a $k$-sparse iterate for future updates. The analysis of CoSaMP consists of bounding the estimation error in each step. Owing to Theorem~\ref{thm:key}, we advance the theoretical result of CoSaMP by improving the error bound for its last step, and hence the RIP condition.

\begin{theorem}\label{thm:cosamp}
Consider the model~\eqref{model:CS} and the CoSaMP algorithm. Pick $k \geq K$ and let $\{ \bx^t \}_{t\geq 1}$ be the iterates produced by CoSaMP. Then, under the RIP condition
\begin{equation*}
\delta_{ 3k+K} \leq \frac{\(\sqrt{32\nu + 49} - 9\)^{1/2}}{4\sqrt{\nu-1}},
\end{equation*}
it holds that for all $t \geq 1$
\begin{equation*}
\twonorm{\bx^t - \bx^*} \leq 0.5^t \twonorm{\bx^{0} - \bx^*} + \const \twonorm{\beps},
\end{equation*}
where $\nu$ is given by Theorem~\ref{thm:key}.
\end{theorem}

Roughly speaking, the bound is still inversely proportional to $\sqrt{\nu}$. Hence, it is monotonically increasing with respect to $k$, indicating our theorem is more effective for a large quantity of $k$. In fact, for the CoSaMP algorithm, our bound above is superior to the best known result even when $k=K$. To see this, we have the RIP condition $\delta_{4K} \leq 0.31$. In comparison, \cite{needell2009cosamp} derived a bound $\delta_{4K} \leq 0.1$ and~\cite[Theorem 6.27]{foucart2013mathematical} improved it to $\delta_{4K} < 0.29$ for a geometric rate of $0.5$. We notice that for binary sparse vectors, \cite{jain2014iterative} presented a different proof technique and obtained the RIP condition $\delta_{4K} \leq 0.35$ for CoSaMP.

%\subsection{Gradient Support Pursuit}
%The GraSP algorithm generalizes CoSaMP in the sense that GraSP supports a wide class of loss functions rather than the least-squares. To characterize the problem structure, a property called stable restricted Hessian~(SRH) was introduced in~\cite{bahmani2013greedy} which is quantitatively equal to the condition number of the Hessian matrix when restricted to sparse directions. In special cases, SRH reduces to RIP. The hard thresholding operation also emerges in their last step, and hence the SRH requirement can be relaxed like we have done for CoSaMP, implying that GraSP can in reality cope with more difficult statistical models.
%
%\begin{theorem}\label{thm:gradsp}
%Suppose the objective function that GraSP minimizes is a twice continuously differentiable function that has $\mu_{3k+K}$-SRH and the smallest eigenvalue of the Hessian matrix restricted on $(3k+K)$-sparse vectors bounded away from zero. Then GraSP converges with a geometric rate of $0.5$ provided that
%\begin{equation*}
%\mu_{3k+K} \leq 1 + \frac{(\sqrt{\nu} + 1)^{1/2} - 1}{\sqrt{\nu}}.
%\end{equation*}
%\end{theorem}

\section{Hard Thresholding in Large-Scale Optimization}\label{sec:alg}
Now we move on to the machine learning setting where our focus is pursuing an optimal sparse solution that minimizes a given objective function based on a set of training samples $Z_1^n \defeq \{Z_i\}_{i=1}^n$. Different from compressed sensing, we usually have sufficient samples which means $n$ can be very large. Therefore, the computational complexity is of primary interest. Formally, we are interested in optimizing the following program:
\begin{equation}\label{eq:primal}
\min_{\bx \in \Rd}\ F(\bx; Z_1^n) = \frac{1}{n}\sum_{i=1}^{n} f(\bx; Z_i),\quad \st\ \zeronorm{\bx} \leq K,\ \twonorm{\bx} \leq \omega.
\end{equation}
The global optimum of the above problem is denoted by $\xopt$. We note that the objective function is presumed to be decomposable with respect to the samples. This is quite a mild condition and most of the popular machine learning models fulfill it. Typical examples include (but not limited to) the sparse linear regression and sparse logistic regression:
\begin{itemize}
\item Sparse Linear Regression: For all $1 \leq i \leq n$, we have $Z_i = (\ba_i, y_i) \in \Rd \times \R$ and the loss function $F(\bx; Z_1^n) = \frac{1}{2n} \twonorm{\bA \bx - \by}^2$ is the least-squares and can be explained by $f(\bx; Z_i) = \frac{1}{2}\twonorm{\ba_i \cdot \bx - y_i}^2$.

\item Sparse Logistic Regression: For all $1 \leq i \leq n$, we have $Z_i = (\ba_i, y_i) \in \Rd \times \{+1, -1\}$ and the negative log-likelihood is penalized, i.e., $F(\bx; Z_1^n) = \frac{1}{n} \sum_{i=1}^{n} \log\( 1 + \exp\(-y_i \ba_i \cdot \bx \) \)$ for which $f(\bx; Z_i) = \log\( 1 + \exp\(-y_i \ba_i \cdot \bx \) \)$.
\end{itemize}
To ease notation, we will often write $F(\bx; Z_1^n)$ as $F(\bx)$ and $f(\bx; Z_i)$ as $f_i(\bx)$ for $i=1, 2, \cdots, n$. It is worth mentioning that the objective function $F(\bx)$ is allowed to be non-convex. Hence, in order to ensure the existence of a global optimum, a natural option is to impose an $\ell_p$-norm ($p \geq 1$) constraint~\cite{loh2012high,loh2015regularized}. Here we choose the $\ell_2$-norm constraint owing to its fast projection. Previous work, e.g.,~\cite{agarwal2012fast} prefers the computationally less efficient $\ell_1$-norm to promote sparsity and to guarantee the existence of optimum. In our problem, yet, we already have imposed the hard sparsity constraint so the $\ell_2$-norm constraint is a better fit.

The major contribution of this section is a computationally efficient algorithm termed hard thresholded stochastic variance reduced gradient method~(HT-SVRG) to optimize~\eqref{eq:primal}, tackling one of the most important problems in large-scale machine learning: producing sparse solutions by stochastic methods. We emphasize that the formulation~\eqref{eq:primal} is in stark contrast to the $\ell_1$-regularized programs considered by previous stochastic solvers such as Prox-SVRG~\cite{xiao2014proximal} and SAGA~\cite{saga}. We target here a stochastic algorithm for the {\em non-convex} problem that is less exploited in the literature. From a theoretical perspective,~\eqref{eq:primal} is more difficult to analyze but it always produces sparse solutions, whereas performance guarantees for convex programs are fruitful but one cannot characterize the sparsity of the obtained solution (usually the solution is not sparse). When we appeal to stochastic algorithms to solve the convex programs, the $\ell_1$-norm formulation becomes much less effective in terms of sparsification, naturally owing to the randomness. See~\cite{langford2009sparse,xiao2010dual,duchi2009fobos} for more detailed discussion on the issue. We also remark that existing work such as~\cite{yuan2018gradient,bahmani2013greedy,jain2014iterative} investigated the sparsity-constrained problem~\eqref{eq:primal} in a batch scenario, which is not practical for large-scale learning problems. The perhaps most related work to our new algorithm is~\cite{nguyen2014linear}. Nonetheless, the optimization error therein does not vanish for noisy statistical models.

Our main result shows that for prevalent statistical models, our algorithm is able to recover the true parameter with a linear rate. Readers should distinguish the optimal solution $\xopt$ and the true parameter. For instance, consider the model~\eqref{model:CS}. Minimizing~\eqref{eq:primal} does not amount to recovering $\bx^*$ if there is observation noise. In fact, the convergence to $\xopt$ is only guaranteed to an accuracy reflected by the {\em statistical precision} of the problem, i.e., $\twonorm{\bx^* - \xopt}$, which is the best one can hope for any statistical model~\cite{agarwal2012fast}. We find that the global convergence is attributed to both the tight bound and the variance reduction technique to be introduced below, and examining the necessity of them is an interesting future work. % There is numerous insightful work devoted to the minimax rate of the statistical error~\cite{zhang2010nearly,raskutti2011minimax}, and we will also discuss it in this section.

\begin{algorithm}[h]
\caption{Hard Thresholded Stochastic Variance Reduced Gradient Method~(HT-SVRG)}
\label{alg:all}
\begin{algorithmic}[1]
\REQUIRE Training samples $\{Z_i\}_{i=1}^n$, maximum stage count $S$, sparsity parameter $k$, update frequency $m$, learning rate $\eta$, radius $\omega$, initial solution $\xtilde^0$.
\ENSURE Optimal solution $\xtilde^S$.
\FOR{$s=1$ to $S$}
\STATE Set $\xtilde = \xtilde^{s-1}$, $\ \mutilde = \frac{1}{n} \sum_{i=1}^{n} \nabla f_i(\xtilde)$, $\ \bx^0 = \xtilde$.
\FOR{$t=1$ to $m$}
\STATE Uniformly pick $i_t \in \{1, 2, \cdots, n\}$ and update the solution
\begin{align*}
\bb^{t} &= \bx^{t-1} - \eta \( \nabla f_{i_t}(\bx^{t-1}) - \nabla f_{i_t}(\xtilde) + \mutilde \),\\
\br^{t} &= \Hk{\bb^{t}},\\
\bx^{t} & = \Pi_{\omega}(\br^t).
\end{align*}
\ENDFOR
\STATE Uniformly choose $j^s \in \{0, 1, \cdots, m-1\}$ and set $\xtilde^{s} = \bx^{j^s}$.
\ENDFOR
\end{algorithmic}
\end{algorithm}

\subsection{Algorithm}
Our algorithm~(Algorithm~\ref{alg:all}) applies the framework of~\cite{svrg}, where the primary idea is to leverage past gradients for the current update for the sake of variance reduction~--~a technique that has a long history in statistics~\cite{owen2000safe}. To guarantee that each iterate is $k$-sparse, it then invokes the hard thresholding operation. Note that the orthogonal projection for $\br^t$ will not change the support set, and hence $\bx^t$ is still $k$-sparse. Also note that our sparsity constraint in~\eqref{eq:primal} reads as $\zeronorm{\bx} \leq K$. What we will show below is that when the parameter $k$ is properly chosen (which depends on $K$), we obtain a globally convergent sequence of iterates. %This actually coincides with the merit of~\cite{jain2014iterative}. In fact, from a high level, our algorithm can be seen as a stochastic counterpart of~\cite{jain2014iterative}, though with a more involved and essentially different proof technique.
%To understand this, note that $\bb^t$ is used as the new iterate in~\cite{svrg}, i.e., $\bx^t = \bb^t$. When the sequence of $\{\bx^t\}_{t\geq 1}$ approaches the optimum, so does $\xtilde$ since it is chosen from this sequence. Hence, the full gradient evaluated at $\xtilde$, i.e., $\mutilde$ tends to zero. Meanwhile, the difference of $\nabla f_{i_t}(\bx^t) - \nabla f_{i_t}(\xtilde)$ vanishes, which yields the used gradient approximating zero.  Another property of SVRG is that the gradient is unbiased. Therefore, in expectation SVRG is incrementally minimizing the objective function.

The most challenging part on establishing the global convergence comes from the hard thresholding operation $\Hk{\br^t}$. Note that it is $\bb^t$ that reduces the objective value in expectation. If $\bb^t$ is not $k$-sparse (usually it is dense), $\bx^t$ is not equal to $\bb^t$ so it does not decrease the objective function. In addition, compared with the convex proximal operator~\cite{saga} which enjoys the non-expansiveness of the distance to the optimum, the hard thresholding step can enlarge the distance up to a multiple of $2$ if using the bound~\eqref{eq:old}. What makes it a more serious issue is that these inaccurate iterates $\bx^t$ will be used for future updates, and hence the error might be progressively propagated at an exponential rate.

Our key idea is to first bound the curvature of the function from below and above to establish RIP-like condition, which, combined with Theorem~\ref{thm:key}, downscales the deviation resulting from hard thresholding. Note that $\nu$ is always greater than one (see Theorem~\ref{thm:key}), hence the curvature bound is necessary. Due to variance reduction, we show that the optimization error vanishes when restricted on a small set of directions as soon as we have sufficient samples. Moreover, with hard thresholding we are able to control the error per iteration and to obtain near-optimal sample complexity.

\subsection{Deterministic Analysis}
We will first establish a general theorem that characterizes the progress of HT-SVRG for approximating an arbitrary $K$-sparse signal $\xhat$. Then we will discuss how to properly choose the hyper-parameters of the algorithm. Finally we move on to specify $\xhat$ to develop convergence results for a global optimum of~\eqref{eq:primal} and for a true parameter (e.g., $\bx^*$ of the compressed sensing problem).

\subsubsection{Assumption}
Our analysis depends on two properties of the curvature of the objective function that have been standard in the literature. Readers may refer to~\cite{bickel2009simultaneous,negahban2009unified,jain2014iterative} for a detailed description.
\begin{definition}[Restricted Strong Convexity]
A differentiable function $g:\ \Rd \rightarrow \R$ is said to satisfy the property of restricted strong convexity~(RSC) with parameter $\alpha_r > 0$, if for all vectors $\bx$, $\bx' \in \Rd$ with $\zeronorm{\bx - \bx'} \leq r$, it holds that
\begin{equation*}
g(\bx') - g(\bx) - \inner{ \nabla g(\bx)}{ \bx' - \bx } \geq \fractwo{\alpha_r} \twonorm{\bx' - \bx}^2.
\end{equation*}
\end{definition}
\begin{definition}[Restricted Smoothness]
A differentiable function $g:\ \Rd \rightarrow \R$ is said to satisfy the property of restricted smoothness~(RSS) with parameter $L_r > 0$, if for all vectors $\bx$, $\bx' \in \Rd$ with $\zeronorm{\bx - \bx'} \leq r$, it holds that
\begin{equation*}
\twonorm{\nabla g(\bx') - \nabla g(\bx)} \leq L_r \twonorm{\bx' - \bx}.
\end{equation*}
\end{definition}
With these definitions, we assume the following:
\begin{enumerate}[label=$(A\arabic*)$]
\item $F(\bx)$ satisfies the RSC condition with parameter $\alpha_{k+K}$.\label{as:rsc}
\item For all $1 \leq i \leq n$, $f_i(\bx)$ satisfies the RSS condition with parameter $L_{3k+K}$.\label{as:rss}
\end{enumerate}
Here, we recall that $K$ was first introduced in~\eqref{eq:primal} and the parameter $k$ was used in our algorithm. Compared to the convex algorithms such as SAG~\cite{sag}, SVRG~\cite{svrg} and SAGA~\cite{saga} that assume strong convexity and smoothness everywhere, we only assume these in a restricted sense. This is more practical especially in the high dimensional regime where the Hessian matrix could be degenerate~\cite{agarwal2012fast}. We also stress that the RSS condition is imposed on each $f_i(\bx)$, whereas prior work requires it for $F(\bx)$ which is milder than ours~\cite{negahban2009unified}.

\subsubsection{Upper Bound of Progress}
For brevity, let us denote
\begin{equation*}
L := L_{3k+K},\quad	\alpha := \alpha_{k+K}, \quad c := L / \alpha,
\end{equation*}
where we call the quantity $c$ as the condition number of the problem. It is also crucial to measure the $\ell_2$-norm of the gradient restricted on sparse directions, and we write
\begin{equation*}
\twonorm{\nabla_{3k+K} F(\bx)} := \max_{\Omega}\big\{ \twonorm{\PO{\nabla F(\bx)}}:\ \abs{\Omega} \leq 3k+K \big\}.
\end{equation*}
Note that for convex programs, the above evaluated at a global optimum is zero. As will be clear, $\twonorm{\nabla_{3k+K} F(\bx)}$ reflects how close the iterates returned by HT-SVRG can be to the point $\bx$. For prevalent statistical models, it vanishes when there are sufficient samples. Related to this quantity, our analysis also involves
\begin{equation*}
Q(\bx) := \(16 \nu \eta^2 L \omega m + \frac{2\omega}{\alpha}\) \twonorm{\nabla_{3k+K} F(\bx)} + 4 \nu \eta^2 m \twonorm{\nabla_{3k+K} F(\bx)}^2,
\end{equation*}
where we recall that $\nu$ is the expansiveness factor given by Theorem~\ref{thm:key}, $\eta$ and $m$ are used in the algorithm and $\omega$ is a universal constant that upper bounds the $\ell_2$-norm of the signal we hope to estimate. Virtually, with an appropriate parameter setting, $Q(\bx)$ scales as $\twonorm{\nabla_{3k+K} F(\bx)}$ which will be clarified. For a particular stage $s$, we denote $\mathcal{I}^s := \{ i_1, i_2, \cdots, i_m \}$, i.e., the samples randomly chosen for updating the solution.
\begin{theorem}\label{thm:general}
Consider Algorithm~\ref{alg:all} and a $K$-sparse signal $\xhat$ of interest. Assume~\ref{as:rsc} and~\ref{as:rss}. Pick the step size $0 < \eta < 1/(4L)$. If $\nu < 4L/(4L-\alpha)$, then it holds that
\begin{equation*}
\E \big[F(\xtilde^{s}) - F(\xhat)\big] \leq \beta^s\big[F(\xtilde^0) - F(\xhat)\big] + \tau(\xhat),
\end{equation*}
where the expectation is taken over $\{ \mathcal{I}^1, j^1, \mathcal{I}^2, j^2, \cdots, \mathcal{I}^s, j^s \}$ and $0 < \beta < 1$ provided that $m$ is large enough. In particular, for $1/(1-\eta\alpha) < \nu < 4L/(4L-\alpha)$, we have
\begin{align*}
\beta &= \beta_1 \defeq \frac{1}{\(2\nu \eta \alpha - 2 \nu \eta^2 \alpha L - \nu +1\)m} + \frac{2\nu\eta^2 \alpha L}{2\nu \eta \alpha - 2 \nu \eta^2 \alpha L - \nu +1},\\
\tau(\xhat) &= \tau_1(\xhat) \defeq  \frac{\alpha Q(\xhat) }{2(2\nu \eta \alpha - 2\nu \eta^2 \alpha L - \nu + 1)(1 - \beta_1)m}.
\end{align*}
For $\nu \leq 1/(1-\eta\alpha)$, we have
\begin{equation*}
\beta =	\beta_2 \defeq \frac{1}{\nu \eta \alpha (1 - 2\eta L)m} + \frac{2\eta L}{1 - 2\eta L},\quad \tau(\xhat) = \tau_2(\xhat) \defeq \frac{Q(\xhat)}{2\nu \eta \alpha (1 - 2\eta L)(1-\beta_2)m}.
\end{equation*}
\end{theorem}
The proof can be found in Appendix~\ref{sec:app:proofgeneral}.
\begin{remark}\label{rmk:online}
For the theorem to hold, $\sqrt{\nu} < \sqrt{4L/(4L - \alpha)} \leq \sqrt{4/3} \approx 1.15$ due to $L \geq \alpha$. Hence, the conventional bound~\eqref{eq:old} is not applicable. In contrast, Theorem~\ref{thm:key} asserts that this condition can be fulfilled by tuning $k$ slightly larger than $K$.
\end{remark}
\begin{remark}
With the conditions on $\eta$ and $\nu$, the coefficient $\beta$ is always less than one provided that $m$ is sufficiently large.
\end{remark}
\begin{remark}
The theorem does {\em not} assert convergence to an arbitrary sparse vector $\xhat$. This is because $F(\xtilde^s) - F(\xhat)$ might be less than zero. However, specifying $\xhat$ does give convergence results, as to be elaborated later.
\end{remark}

\subsubsection{Hyper-Parameter Setting}

Before moving on to the convergence guarantee, let us discuss the minimum requirement on the hyper-parameters $k$, $m$ and $\eta$, and determine how to choose them to simplify Theorem~\ref{thm:general}.

For the sake of success of HT-SVRG, we require $\nu < 4c/(4c-1)$, which implies $\rho < 1/(16c^2 - 4c)$. Recall that $\rho$ is given in Theorem~\ref{thm:key}. In general, we are interested in the regime $K \leq k \ll d$. Hence, we have $\rho = K/k$ and the minimum requirement for the sparsity parameter is 
\begin{equation}\label{eq:k req}
k > (16c^2-4c)K.
\end{equation}
To our knowledge, the idea of relaxed sparsity was first introduced in \cite{zhang2011sparse} for OMP and in \cite{jain2014iterative} for projected gradient descent. However, the relaxed sparsity here emerges in a different way in that HT-SVRG is a stochastic algorithm, and their proof technique cannot be used.

We also contrast our tight bound to the inequality~\eqref{eq:triangle_jain} that is obtained by combining the triangle inequality and Lemma~1 of~\cite{jain2014iterative}. Following our proof pipeline, \eqref{eq:triangle_jain} gives
\begin{equation*}
k \geq \( 1 - \( \sqrt{{4c}(4c-1)^{-1} }-1 \)^2 \) d + \( \sqrt{{4c}(4c-1)^{-1}} -1\)^2 K
\end{equation*}
which grows with the dimension $d$, whereas using Theorem~\ref{thm:key} the sparsity parameter $k$ depends only on the desired sparsity $K$. In this regard, we conclude that for the stochastic case, our bound is vital.

Another component of the algorithm is the update frequency $m$. Intuitively, HT-SVRG performs $m$ number of stochastic gradient update followed by a full gradient evaluation, in order to mitigate the variance. In this light, $m$ should not be too small. Otherwise, the algorithm reduces to the full gradient method which is not computationally efficient. On the other spectrum, a large $m$ leads to a slow convergence that is reflected in the convergence coefficient $\beta$. To quantitatively analyze how $m$ should be selected,  let us consider the case $\nu \leq 1/(1-\eta\alpha)$ for example. The case $1/(1-\eta\alpha) < \nu < 4L/(4L-\alpha)$ follows in a similar way. In order to ensure $\beta_2 < 1$, we must have $m > 1 / \(\nu \eta \alpha (1-4\eta L)\)$. In particular, picking
\begin{equation}\label{eq:eta req}
\eta = \frac{\eta'}{L},\quad \eta' \in (0, 1/4),
\end{equation}
we find that the update frequency $m$ has to satisfy
\begin{equation}\label{eq:m req}
m > \frac{c}{\nu \eta' (1 - \eta')},
\end{equation}
which is of the same order as in the convex case~\cite{svrg} when $\eta' = \Theta(1)$. Note that the way we choose the learning rate $\eta = \eta' / L$ is also a common practice in convex optimization~\cite{nesterov2004introductory}.

With~\eqref{eq:k req},~\eqref{eq:eta req} and~\eqref{eq:m req} in mind, we provide detailed choices of the hyper-parameters. Due to $0 < \eta < 1/(4L)$, $\beta_1$ is monotonically increasing with respect to $\nu$. By Theorem~\ref{thm:key}, we know that $\nu$ is decreasing with respect to $k$. Thus, a larger quantity of $k$ results in a smaller value of $\beta_1$, and hence a faster rate. Interestingly, for $\beta_2$ we discover that the smaller the $k$ is, the faster the algorithm concentrates. Hence, we have the following consequence:
\begin{proposition}\label{prop:lambda}
Fix $\eta$ and $m$. Then the optimal choice of $\nu$ in Theorem~\ref{thm:general} is $\nu = 1/(1-\eta \alpha)$ in the sense that the convergence coefficient $\beta$ attains the minimum.
\end{proposition}

In light of the proposition, in the sections to follow, we will only consider the setting $\nu = 1/(1-\eta \alpha)$. But we emphasize that our analysis and results essentially apply to any $\nu \leq 4L/(4L - \alpha)$.

Now let
\begin{equation}\label{eq:param}
\eta = \frac{1}{8L},\quad m = 4(8c - 1),\quad k = 8c(8c-1) K.
\end{equation}
This gives
\begin{equation}\label{eq:beta tau}
\beta = \frac{2}{3},\quad \tau(\xhat) = \frac{5\omega}{\alpha} \twonorm{\nabla_{3k+K} F(\xhat)} + \frac{1}{\alpha L} \twonorm{\nabla_{3k+K} F(\xhat)}^2.
\end{equation}

\subsubsection{Global Linear Convergence}
We are in the position to state the global linear convergence to an optimum of the sparsity-constrained optimization program~\eqref{eq:primal}.
\begin{corollary}\label{coro:optimum}
Assume~\ref{as:rsc} and~\ref{as:rss}. Consider the HT-SVRG algorithm with hyper-parameters given in~\eqref{eq:param}. Then the sequence $\{\xtilde^s\}_{s\geq 1}$ converges linearly to a global optimum $\xopt$ of~\eqref{eq:primal}
\begin{align*}
\E \big[F(\xtilde^{s}) - F(\xopt)\big] \leq&\ \(\frac{2}{3}\)^s\big[F(\xtilde^0) - F(\xopt)\big]  \\
&\ + \frac{5\omega}{\alpha} \twonorm{\nabla_{3k+K} F(\xopt)}+ \frac{1}{\alpha L} \twonorm{\nabla_{3k+K} F(\xopt)}^2.
\end{align*}
\end{corollary}
\begin{proof}
This is a direct consequence of Theorem~\ref{thm:general}.
\end{proof}
Whenever $\nabla_{3k+K} F(\xopt) = \bzero$, the corollary reads as
\begin{equation*}
\E \big[F(\xtilde^{s}) - F(\xopt)\big] \leq \(\frac{2}{3}\)^s\big[F(\xtilde^0) - F(\xopt)\big].
\end{equation*}
It implies that if one is solving a convex problem without the sparsity constraint but the optimal solution happens to be sparse, it is safe to perform hard thresholding without loss of optimality. We exemplify such behavior with another algorithm SAGA~\cite{saga} in Appendix~\ref{sec:app:saga}. In the  noiseless compressed sensing setting where $\by = \bA \bx^*$, the corollary guarantees that HT-SVRG exactly recovers the underlying true signal $\bx^*$ when $F(\bx)$ is chosen as the least-squares loss in that $\xopt = \bx^*$ and $\nabla F(\bx^*) = \bA\trans (\bA\bx^* - \by) = \bzero$.

On the other side, the RSC property implies that
\begin{equation*}
\twonorm{\xtilde^s - \xhat} \leq \sqrt{ \frac{2 \max\{ F(\xtilde^s) - F(\xhat), 0 \}}{\alpha} } + \frac{2 \twonorm{\nabla_{k+K} F(\xhat)}}{\alpha}. 
\end{equation*}
The proof is straightforward and can be found in Lemma 14 of~\cite{shen2017iteration}. Now we specify $\xhat$ as the true parameter of some statistical model, for instance, $\bx^*$ in~\eqref{model:CS}. It is hence possible to establish recovery guarantee of $\bx^*$, which is known as the problem of parameter estimation.
\begin{corollary}\label{coro:param}
Assume~\ref{as:rsc} and~\ref{as:rss}. Let $L'$ be the RSS parameter of $F(\bx)$ at the sparsity level $3k + K$. Consider the HT-SVRG algorithm with hyper-parameters given in~\eqref{eq:param}. Then the sequence $\{\xtilde^s\}_{s\geq 1}$ recovers a $K$-sparse signal $\bx^*$ with a geometric rate
\begin{align*}
\E \big[\twonorm{\xtilde^s - \bx^*} \big] \leq&\ \sqrt{\frac{2L'}{\alpha}} \cdot \( \frac{2}{3} \)^{\frac{s}{2}} \twonorm{\xtilde^0 - \bx^*} + \sqrt{\frac{10\omega}{\alpha^2} \twonorm{\nabla_{3k+K} F(\bx^*)}} \\
&\ + \( \sqrt{\frac{2}{\alpha^3}}  + \frac{3}{\alpha} \) \twonorm{\nabla_{3k+K} F(\bx^*)}.
\end{align*}
\end{corollary}
The proof can be found in Appendix~\ref{sec:app:proofparam}.
\begin{remark}
The RSS parameter $L'$ of $F(\bx)$ always ranges in $[\alpha, L]$, which is simply by definition.
\end{remark}

\subsubsection{Computational Complexity}

We compare the computational complexity of HT-SVRG to that of projected gradient descent~(PGD) studied in \cite{jain2014iterative}, which is a batch counterpart to HT-SVRG. First, we remark that the analysis of PGD is based on the smoothness parameter $L'$ of $F(\bx)$ at sparsity level $2k+K$. We write $c' = L' / \alpha$. To achieve a given accuracy $\epsilon > 0$, PGD requires $\order{c' \log(1/\epsilon)}$ iterations. Hence the total computational complexity is $\order{nc' d\log(1/\epsilon)}$. For HT-SVRG, in view of Corollary~\ref{coro:optimum}, the convergence coefficient is a constant. Hence, HT-SVRG needs $\order{\log(1/\epsilon)}$ iterations where we note that the error term $\twonorm{\nabla_{3k+K} F(\bx^*)}$ can be made as small as $\epsilon$ with sufficient samples (to be clarified in the sequel). In each stage, HT-SVRG computes a full gradient $\mutilde$ followed by $m$ times stochastic updates. Therefore, the total complexity of HT-SVRG is given by $\order{(n+c)d \log(1/\epsilon)}$ by noting the fact $m = \order{c}$. In the scenario $c < n(c' - 1)$, HT-SVRG significantly improves on PGD in terms of time cost.

\subsection{Statistical Results}

The last ingredient of our theorem is the term $\tau(\xhat)$ which measures how close the iterates could be to a given sparse signal $\xhat$. With appropriate hyper-parameter settings, the quantity relies exclusively on $\twonorm{\nabla_{3k+K} F(\xhat)}$, as suggested by~\eqref{eq:beta tau}. Thereby, this section is dedicated to characterizing $\twonorm{\nabla_{3k+K} F(\xhat)}$. We will also give examples for which HT-SVRG is computationally more efficient than PGD. For the purpose of a concrete result, we study two problems: sparse linear regression and sparse logistic regression. These are two of the most popular statistical models in the literature and have found a variety of applications in machine learning and statistics~\cite{raskutti2011minimax}. Notably, it is known that similar statistical results can be built for low-rank matrix regression, sparse precision matrix estimation, as suggested in~\cite{negahban2009unified,agarwal2012fast}.

\subsubsection{Sparse Linear Regression}
For sparse linear regression, the observation model is given by
\begin{equation}\label{model:linear}
\by = \bA \bx^* + \beps,\quad \zeronorm{\bx^*} \leq K,\ \twonorm{\bx^*} \leq \omega,
\end{equation}
where $\bA \in \Rnd$ is the design matrix, $\by \in \Rn$ is the response, $\beps \in \Rn$ is some noise, and $\bx^*$ is the $K$-sparse true parameter we hope to estimate from the knowledge of $\bA$ and $\by$. Note that when we have the additional constraint $n \ll d$, the model above is exactly that of compressed sensing~\eqref{model:CS}.

In order to (approximately) estimate the parameter, a natural approach is to optimize the following non-convex program:
\begin{equation}\label{prog:linear}
\min_{\bx}\ F(\bx) \defeq \frac{1}{2n}\sum_{i=1}^{n}\twonorm{y_i - \ba_i \cdot \bx}^2,\quad \st\ \zeronorm{\bx} \leq K,\ \twonorm{\bx} \leq \omega.
\end{equation}

For our analysis, we assume the following on the design matrix and the noise:
\begin{enumerate}[label=$(A\arabic*)$, start=3]
\item $\ba_1, \ba_2, \dots, \ba_n$ are independent and identically distributed~(i.i.d.) Gaussian random vectors $N(\bzero, \bSigma)$. All the diagonal elements of $\bSigma$ satisfy $\Sigma_{jj} \leq 1$. The noise $\beps$ is independent of $\bA$ and its entries are i.i.d. Gaussian random variables ${N}(0, \sigma^2)$.  \label{as:sub}
\end{enumerate}

\begin{proposition}\label{prop:stat linear}
Consider the sparse linear regression model~\eqref{model:linear} and the program~\eqref{prog:linear}. Assume~\ref{as:sub}. Then for a sparsity level $r$, 
\begin{itemize}
\item with probability at least $1 - \exp(-\const_0 n)$,
\begin{equation*}
\alpha_r = \lambda_{\min}(\bSigma) - \const_1 \frac{r \log d}{n},\quad L_r' = \lambda_{\max}(\bSigma) + \const_2 \frac{r \log d}{n};
\end{equation*}

\item with probability at least $1 - {\const_3 r}/{d}$
\begin{equation*}
L_r = \const_4 r \log d;
\end{equation*}

\item and with probability at least $1 - \const_5 / d$
\begin{equation*}
\twonorm{\nabla_r F(\bx^*)} \leq \const_6 \sigma \sqrt{\frac{r \log d}{n}},\quad \twonorm{\nabla_r F(\xopt)} \leq L_r'  \twonorm{\xopt - \bx^*}  + \const_6 \sigma \sqrt{\frac{ r \log d}{ n}}.
\end{equation*}
\end{itemize}
Above, $\lambda_{\min}(\bSigma)$ and $\lambda_{\max}(\bSigma)$ are the minimum and maximum singular values of $\bSigma$ respectively.
\end{proposition}
We recall that $\alpha_r$ and $L_r$ are involved in our assumptions~\ref{as:rsc} and~\ref{as:rss}, and $L'_r$ is the RSS parameter of $F(\bx)$. The estimation for $\alpha_r$, $L'_r$ and $\twonorm{\nabla_r F(\bx^*)}$ follows from standard results in the literature~\cite{raskutti2011minimax}, while that for $L_r$ follows from Proposition E.1 in~\cite{bellec2016slope} by noting the fact that bounding $L_r$ amounts to estimating $\max_i\twonorm{\mathcal{H}_r(\ba_i)}^2$. In order to estimate $\twonorm{\nabla_r F(\xopt)}$, notice that
\begin{align*}
\twonorm{\nabla_r F(\xopt)} \leq&\ \twonorm{\nabla_r F(\xopt) - \nabla_r F(\bx^*)} + \twonorm{ \nabla_r F(\bx^*) } \\
\leq&\ \twonorm{\nabla F(\xopt) - \nabla F(\bx^*)} + \twonorm{ \nabla_r F(\bx^*) } \\
\leq&\ L_r' \twonorm{\xopt - \bx^*} + \twonorm{ \nabla_r F(\bx^*) },
\end{align*}
where we use the definition of RSS in the last inequality.

Now we let $r = 3k + K = \textrm{const} \cdot {c^2 K}$ and get $\alpha = \lambda_{\min}(\bSigma) - \const_1 \frac{c^2 K \log d}{n}$, $L = \const_4 c^2 K \log d$. Suppose that $\lambda_{\min}(\bSigma) =  2 \const_4 (K \log d)^2$ and $n = q \cdot \frac{\const_1}{\const_4} K \log d$ with $q \geq 1$. Then our assumptions~\ref{as:rsc} and~\ref{as:rss} are met with high probability with
\begin{equation*}
\alpha = \const_4 (K \log d)^2, \ L= \const_4 (K \log d)^3,\ \text{and}\  c = K \log d.
\end{equation*}

For Corollary~\ref{coro:optimum}, as far as
\begin{equation*}
s \geq \const_7 \log\( \frac{F(\xtilde^0) - F(\xopt)}{\epsilon} \),\ n = \const_7 \( \omega \sigma  \)^2 \epsilon^{-2}  K \log d,
\end{equation*}
we have
\begin{equation*}
\E\big[ F(\xtilde^s) - F(\xopt) \big] \leq \epsilon + \frac{\lambda_{\max}(\bSigma)}{\lambda_{\min}(\bSigma)} \twonorm{\xopt - \bx^*} +\( \frac{\lambda_{\max}(\bSigma)}{\lambda_{\min}(\bSigma)}\twonorm{\xopt - \bx^*} \)^2
\end{equation*}
for some accuracy parameter $\epsilon > 0$. This suggests that it is possible for HT-SVRG to approximate a global optimum of~\eqref{eq:primal} up to $\twonorm{\xopt - \bx^*}$, namely the statistical precision of the problem.

Returning to Corollary~\ref{coro:param}, to guarantee that
\begin{equation*}
\E\big[ \twonorm{\xtilde^s - \bx^*} \big] \leq \epsilon,
\end{equation*}
it suffices to pick
\begin{equation*}
s \geq \const_8 \log(\omega \sqrt{c'} / \epsilon),\quad n = \const_8 (\omega \sigma)^2 \epsilon^{-4} { K \log d }.
\end{equation*}

Finally, we compare the computational cost to PGD. It is not hard to see that under the same situation $\lambda_{\min}(\bSigma) =  2 \const_4 (K \log d)^2$ and $n = \frac{\const_1}{\const_4} K \log d$,
\begin{equation*}
L' = \const_4 (K \log d)^3,\ c'= K \log d,\ \text{provided\ that}\ \lambda_{\max}(\bSigma) = \const_4 (K \log d)^3 - \frac{\const_2\const_4}{\const_1} (K \log d)^2.
\end{equation*}
Thus $c < n(c' - 1)$, i.e., HT-SVRG is more efficient than PGD. It is also possible to consider other regimes of the covariance matrix and the sample size, though we do not pursue it here.

\subsubsection{Sparse Logistic Regression}

For sparse logistic regression, the observation model is given by
\begin{equation}
\Pr(y_i \mid \ba_i;\ \bx^*) = \frac{1}{1+ \exp(-y_i \ba_i \cdot \bx^*)},\quad \zeronorm{\bx^*} \leq K,\ \twonorm{\bx} \leq \omega,\  \forall\ 1 \leq i \leq n,
\end{equation}
where $y_i$ is either $0$ or $1$. It then learns the parameter by minimizing the negative log-likelihood:
\begin{equation}\label{eq:logistic}
\min_{\bx}\ F(\bx) \defeq \frac{1}{n} \sum_{i=1}^{n} \log\( 1 + \exp(-y_i \ba_i \cdot \bx) \),\quad \st\ \zeronorm{\bx} \leq K,\ \twonorm{\bx} \leq \omega.
\end{equation}
There is a large body of work showing that the statistical property is rather analogous to that of linear regression. See, for example, \cite{negahban2009unified}. In fact, the statistical results apply to generalized linear models as well.

\subsection{A Concurrent Work}
After we posted the first version~\cite{shen2016tight} on {arXiv}, \cite{li2016stochastic} made their work public where a similar algorithm to HT-SVRG was presented. Their theoretical analysis applies to convex objective functions while we allow the function $F(\bx)$ to be non-convex. We also fully characterize the convergence behavior of the algorithm by showing the trade-off between the sparsity parameter $k$ and the convergence coefficient $\beta$~(Proposition~\ref{prop:lambda}).

\section{Experiments}\label{sec:exp}
In this section, we present a comprehensive empirical study  for the proposed HT-SVRG algorithm on two tasks: sparse recovery (compressed sensing) and image classification. The experiments on sparse recovery is dedicated to verifying the theoretical results we presented, and we visualize the classification models learned by HT-SVRG to demonstrate the practical efficacy.

\subsection{Sparse Recovery}
To understand the practical behavior of our algorithm as well as to justify the theoretical analysis, we perform experiments on synthetic data. The experimental settings are as follows:
\begin{itemize}
\item {\bf Data Generation.} \ 
The data dimension $d$ is fixed as $256$ and we generate an $n\times d$ Gaussian random sensing matrix $\bA$ whose entries are i.i.d. with zero mean and variance $1/n$. Then $1000$ $K$-sparse signals $\bx^*$ are independently generated, where the support of each signal is uniformly chosen. That is, we run our algorithm and the baselines for $1000$ trials. The measurements $\by$ for each signal $\bx^*$ is obtained by $\by = \bA \bx^*$ which is noise free. In this way, we are able to study the convergence rate by plotting the logarithm of the objective value since the optimal objective value is known to be zero.

\item {\bf Baselines.} \ 
We mainly compare with two closely related algorithms: IHT and PGD. Both of them compute the full gradient of the least-squares loss followed by hard thresholding. Yet, PGD is more general, in the sense that it allows the sparsity parameter $k$ to be larger than the true sparsity $K$ ($k=K$ for IHT) and also considers a flexible step size $\eta$ ($\eta = 1$ for IHT). Hence, PGD can be viewed as a batch counterpart to our method HT-SVRG.

\item {\bf Evaluation Metric.} \ 
We say a signal $\bx^*$ is successfully recovered by a solution $\bx$ if 
\begin{equation*}
\frac{\twonorm{\bx - \bx^*}}{\twonorm{\bx^*}} < 10^{-3}.
\end{equation*}
In this way, we can compute the percentage of success over the $1000$ trials for each algorithm.

\item {\bf Hyper-Parameters.} \ 
If not specified, we use $m = 3n$, $k = 9K$, and $S=10000$ for HT-SVRG. We also use the heuristic step size $\eta = 2/\mathrm{svds}(\bA \bA\trans)$ for HT-SVRG and PGD, where $\mathrm{svds}(\bA \bA\trans)$ returns the largest singular value of the matrix $\bA\bA\trans$. Since for each stage, HT-SVRG computes the full gradient for $(2m/n + 1)$ times, we run the IHT and PGD for $(2m/n + 1)S$ iterations for fair comparison, i.e., all of the algorithms have the same number of full gradient evaluations.
\end{itemize}

\subsubsection{Phase Transition}
Our first simulation aims at offering a big picture on the recovery performance. To this end, we vary the number of measurements $n$ from $1$ to $256$, roughly with a step size $8$. We also study the performance with respect to the true sparsity parameter $K$, which ranges from $1$ to $26$, roughly with step size $2$. The results are illustrated in Figure~\ref{fig:recovery}, where a brighter block means a higher percentage of success and the brightest ones indicate exact sparse recovery. It is apparent that PGD and HT-SVRG require fewer measurements for an accurate recovery than IHT, possibly due to the flexibility in choosing the sparsity parameter and the step size. We also observe that as a stochastic algorithm, HT-SVRG performs comparably to PGD. This suggests that HT-SVRG is an appealing solution to large-scale sparse learning problems in that HT-SVRG is computationally more efficient.
\begin{figure*}[h]
\centering
\includegraphics[width=0.3\linewidth]{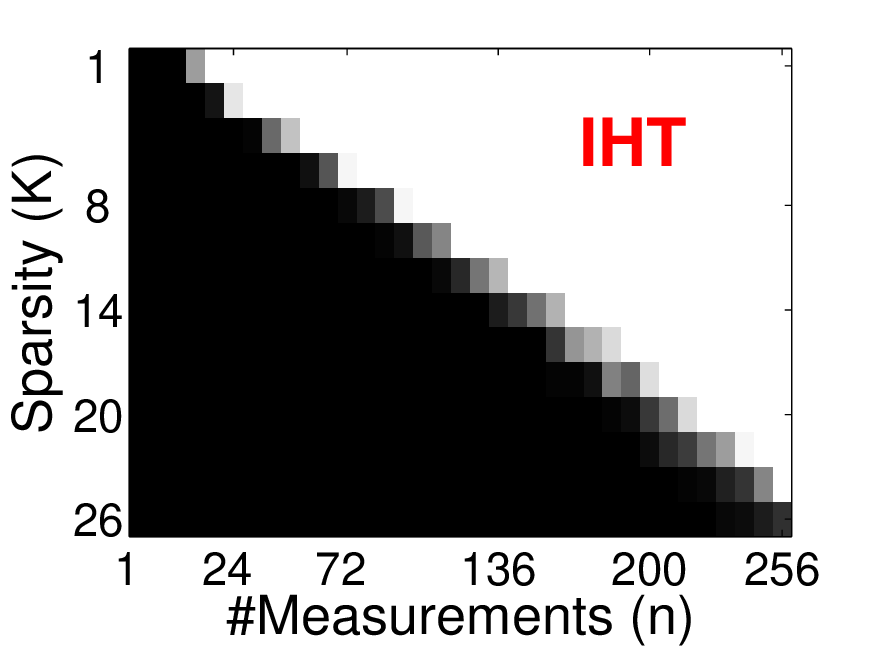}
\includegraphics[width=0.3\linewidth]{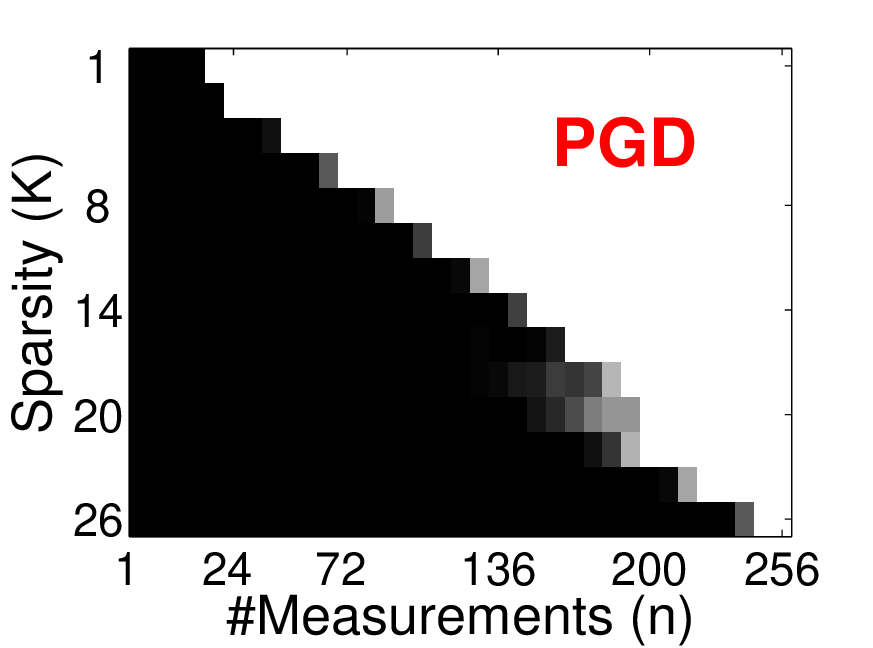}
\includegraphics[width=0.3\linewidth]{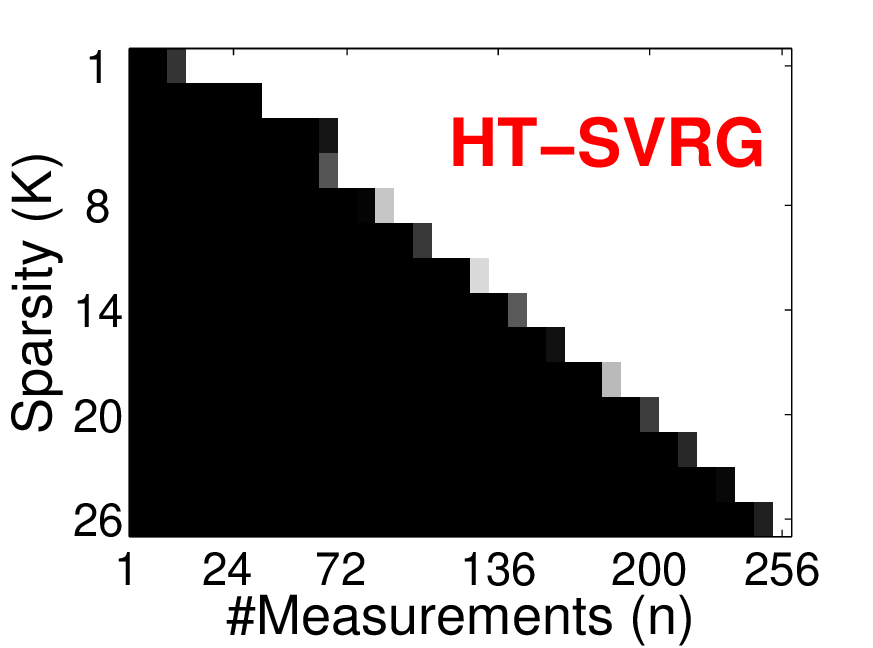}
\caption{{\bf Percentage of successful recovery under various sparsity and sample size.} The values range from $0$ to $100$, where a brighter color  means a higher percentage of success (the brightest blocks correspond to the value of $100$). PGD admits a higher percentage of recovery compared to IHT because it flexibly chooses the step size and sparsity parameter. As a stochastic variant, HT-SVRG performs comparably to the batch counterpart PGD.}
\label{fig:recovery}
\end{figure*}

In Figure~\ref{fig:recoverydetail}, we exemplify some of the results obtained from HT-SVRG by plotting two kinds of curves: the success of percentage against the sample size $n$ and that against the signal sparsity $K$. In this way, one can examine the detailed values and can determine the minimum sample size for a particular sparsity. For instance, the left panel tells that to ensure that $80\%$ percents of the $16$-sparse signals are recovered, we have to collect $175$ measurements. We can also learn from the right panel that using $232$ measurements, any signal whose sparsity is $22$ or less can be reliably recovered.
\begin{figure*}[h]
\centering
\includegraphics[width=0.35\linewidth]{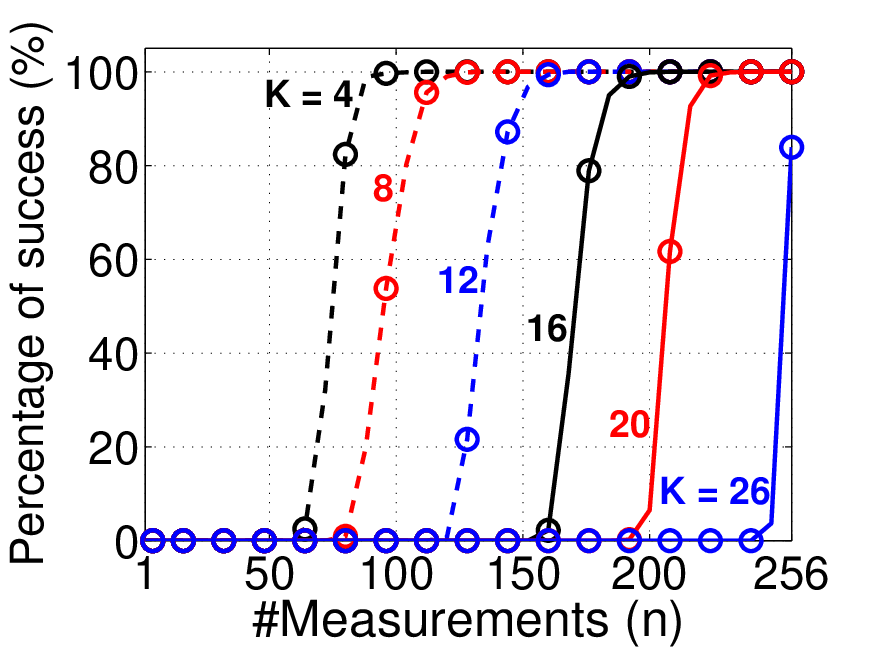}
\includegraphics[width=0.35\linewidth]{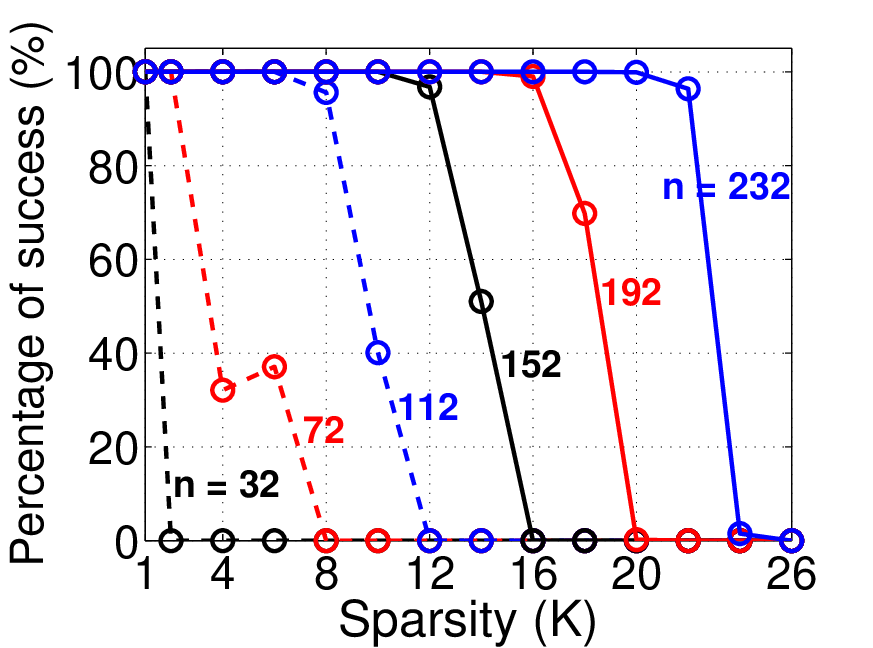}
\caption{{\bf Percentage of success of HT-SVRG against the number of measurements (left) and the sparsity (right).}}
\label{fig:recoverydetail}
\end{figure*}

Based on the results in Figure~\ref{fig:recovery} and Figure~\ref{fig:recoverydetail}, we have an approximate estimation on the minimum requirement of the sample size which ensures accurate (or exact) recovery. Now we are to investigate how many measurements are needed to guarantee a success percentage of $95\%$ and $99\%$. To this end, for each signal sparsity $K$, we look for the number of measurements $n_0$ from Figure~\ref{fig:recovery} where $90$ percents of success are achieved. Then we carefully enlarge $n_0$ with step size $1$ and run the algorithms. The empirical results are recorded in Figure~\ref{fig:min_n}, where the circle markers represent the empirical results with different colors indicating different algorithms, e.g., red circle for empirical observation of HT-SVRG. Then we fit these empirical results by linear regression, which are plotted as solid or dashed lines. For example, the green line is a fitted model for IHT. We find that $n$ is almost linear with $K$. Especially, the curve of HT-SVRG is nearly on top of that of PGD, which again verifies HT-SVRG is an attractive alternative to the batch method.
\begin{figure*}[h]
\centering
\includegraphics[width=0.42\linewidth]{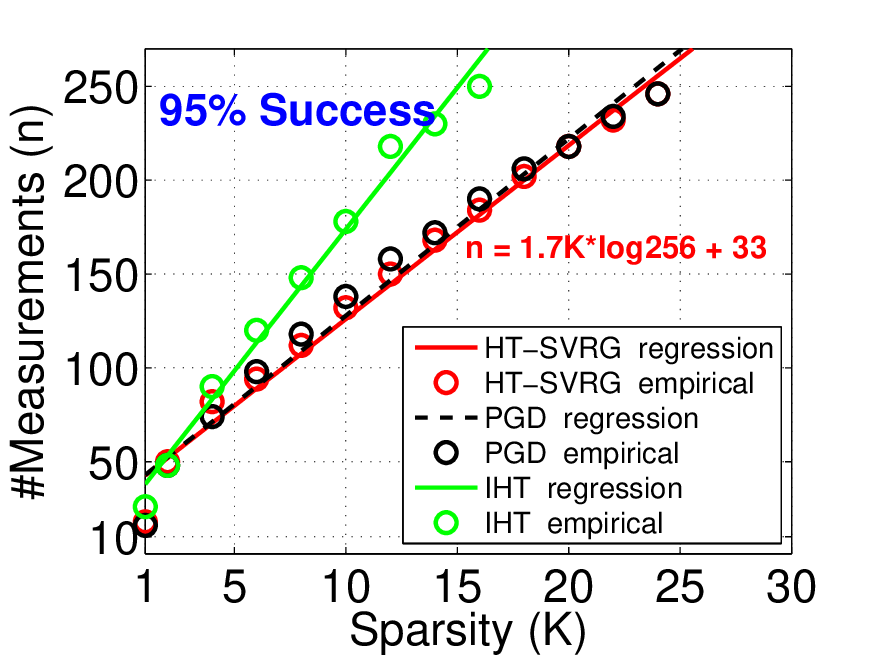}
\includegraphics[width=0.42\linewidth]{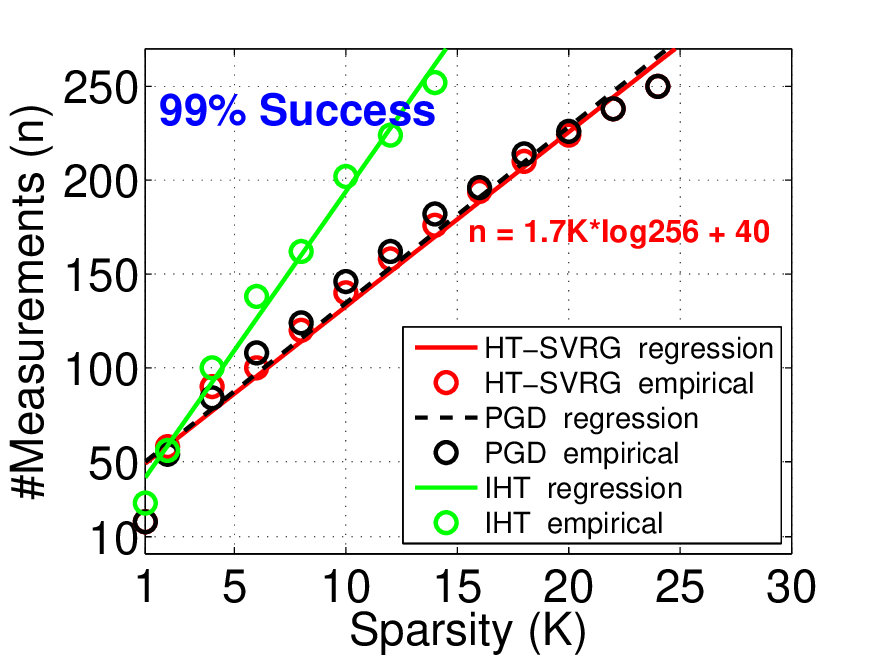}
\caption{{\bf Minimum number of measurements to achieve $95\%$ and $99\%$ percentage of success.} Red equation indicates the linear regression of HT-SVRG. The markers and curves for HT-SVRG are almost on top of PGD, which again justifies that HT-SVRG is an appealing stochastic alternative to the batch method PGD.}
\label{fig:min_n}
\end{figure*}

\subsubsection{Influence of Hyper-Parameters}

Next, we turn to investigate the influence of the hyper-parameters, i.e., the sparsity parameter $k$, update frequency $m$ and step size $\eta$ on the convergence behavior of HT-SVRG. We set the true sparsity $K = 4$ and collect $100$ measurements for each groundtruth signal, i.e., $n = 100$. Note that the standard setting we employed is $k = 9K = 36$, $m = 3n = 300$ and $\eta = 2/\mathrm{svds}(\bA \bA\trans) \approx 0.3$. Each time we vary one of these parameters while fixing the other two, and the results are plotted in Figure~\ref{fig:parameter}. We point out that although the convergence result~(Theorem~\ref{thm:general}) is deterministic, the vanishing optimization error~(Proposition~\ref{prop:stat linear}) is guaranteed under a probabilistic argument. Hence, it is possible that for a specific configuration of parameters, $97\%$ of the signals are exactly recovered but HT-SVRG fails on the remaining, as we have observed in, e.g., Figure~\ref{fig:recoverydetail}. Clearly, we are not supposed to average all the results to examine the convergence rate. For our purpose, we set a threshold $95\%$, that is, we average over the success trials if more than $95\%$ percents of the signals are exactly recovered. Otherwise, we say that the set of parameters cannot ensure convergence and we average over these failure signals which will give an illustration of divergence.

\begin{figure*}[h]
\centering
\includegraphics[width=0.32\linewidth]{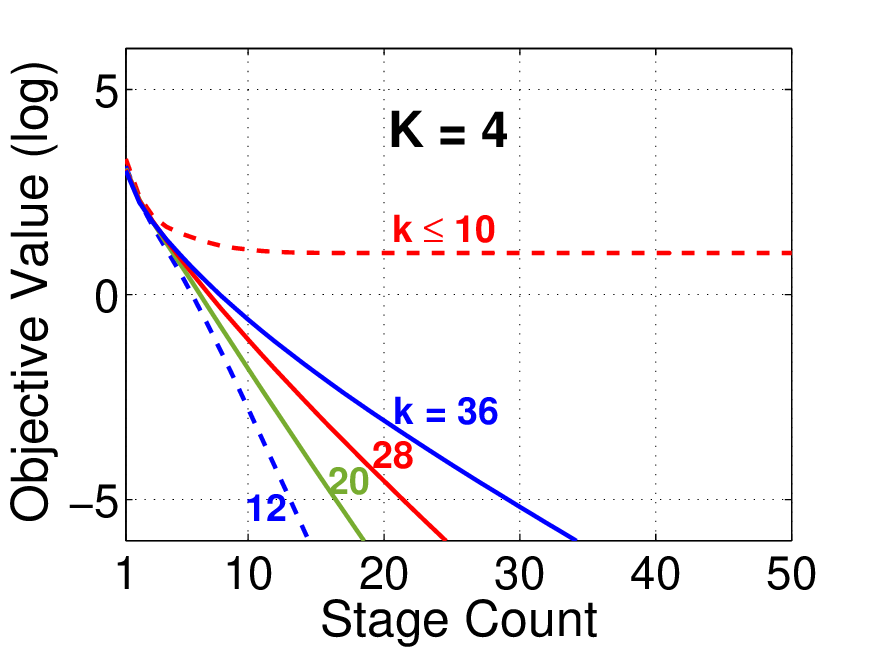}
\includegraphics[width=0.32\linewidth]{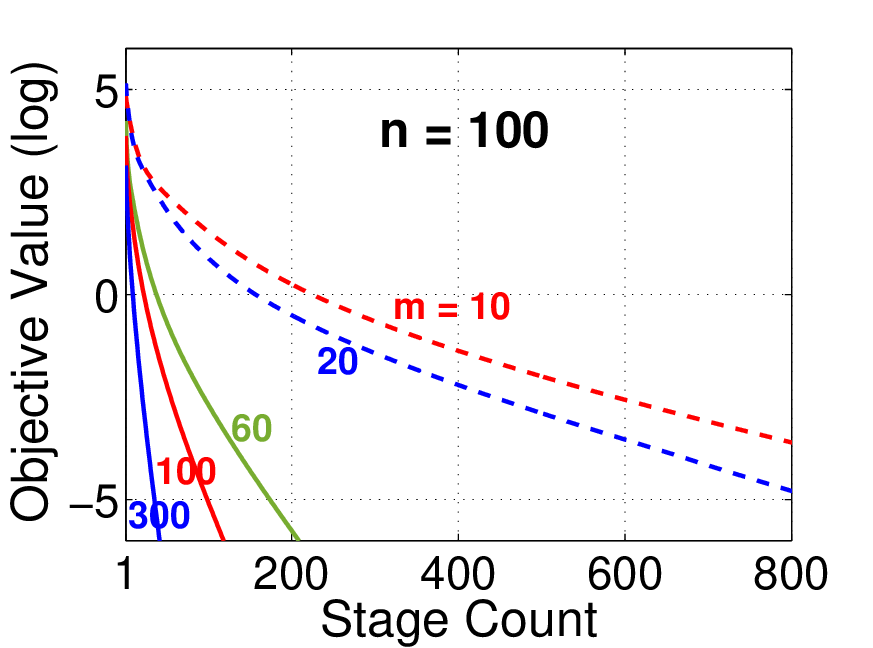}
\includegraphics[width=0.32\linewidth]{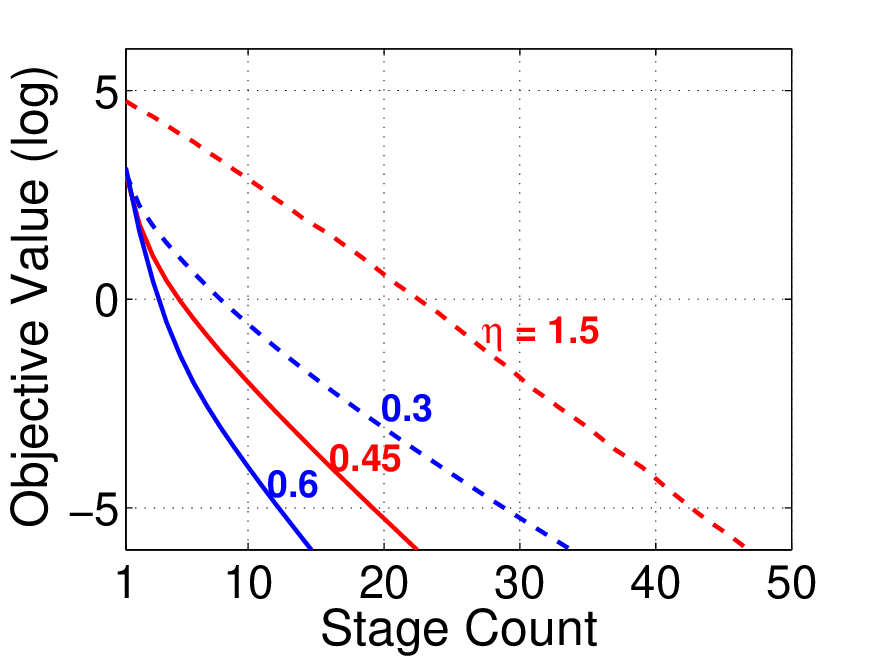}
\caption{{\bf Convergence of HT-SVRG with different parameters.} We have $100$ measurements for the $256$-dimensional signal where only $4$ elements are non-zero. The standard setting is $k = 36$, $m = 300$ and $\eta = 0.3$. {\bf Left:} If the sparsity parameter $k$ is not large enough, HT-SVRG will not recover the signal. {\bf Middle:} A small $m$ leads to a frequent full gradient evaluation and hence slow convergence. {\bf Right:} We observe divergence when $\eta \geq 3$.}
\label{fig:parameter}
\end{figure*}

The left panel of Figure~\ref{fig:parameter} verifies the condition that $k$ has to be larger than $K$, while the second panel shows the update frequency $m$ can be reasonably small in the price of a slow convergence rate. Finally, the empirical study demonstrates that our heuristic choice $\eta = 0.3$ works well, and when $\eta > 3$, the objective value exceeds $10^{120}$ within 3 stages (which cannot be depicted in the figure). For very small step sizes, we plot the convergence curve by gradually enlarging the update frequency $m$ in Figure~\ref{fig:small eta}. The empirical results agree with Theorem~\ref{thm:general} that for any $0 < \eta < 1/(4L)$, HT-SVRG converges as soon as $m$ is large enough.

\begin{figure*}[h]
\centering
\includegraphics[width=0.32\linewidth]{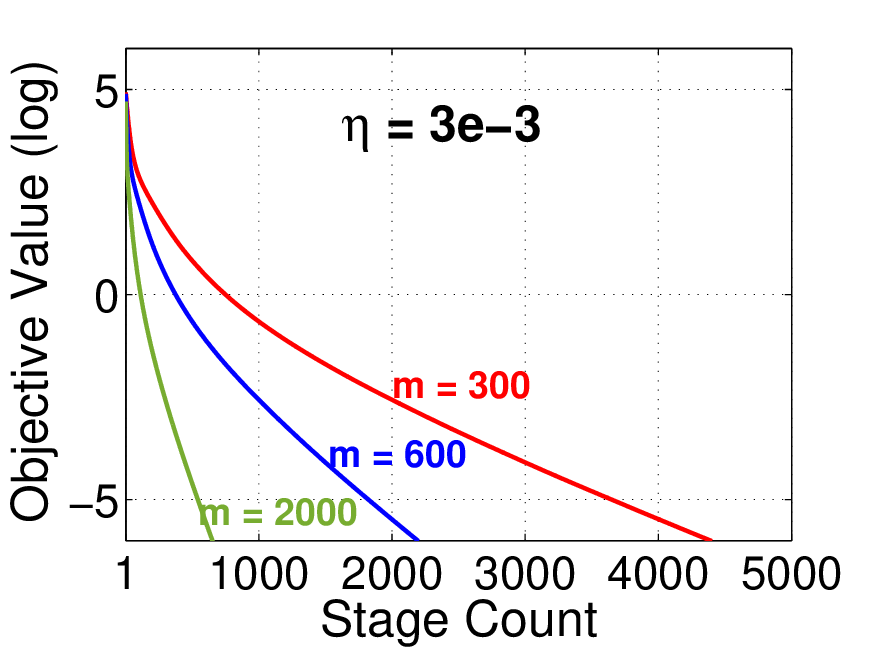}
\includegraphics[width=0.32\linewidth]{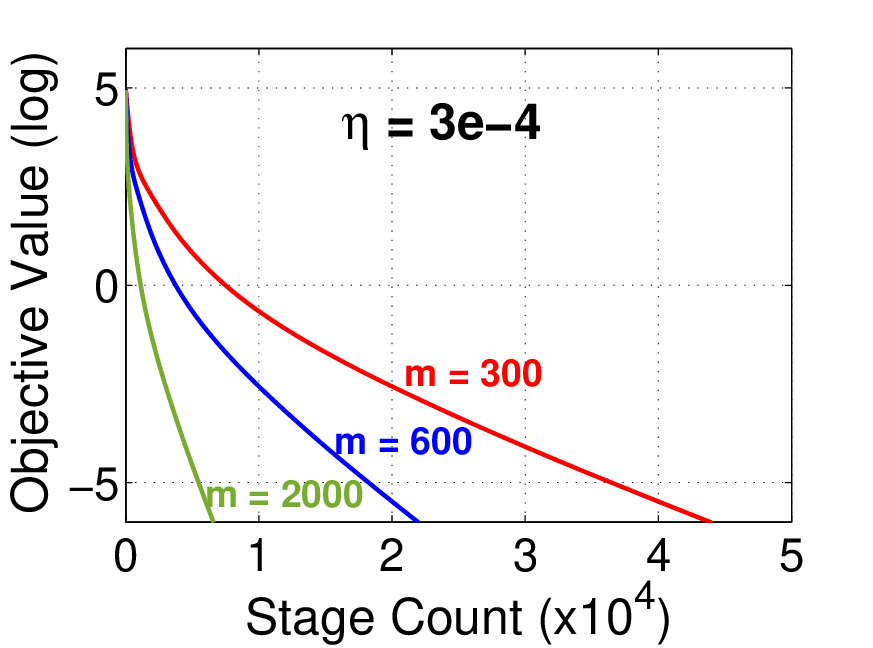}
\includegraphics[width=0.32\linewidth]{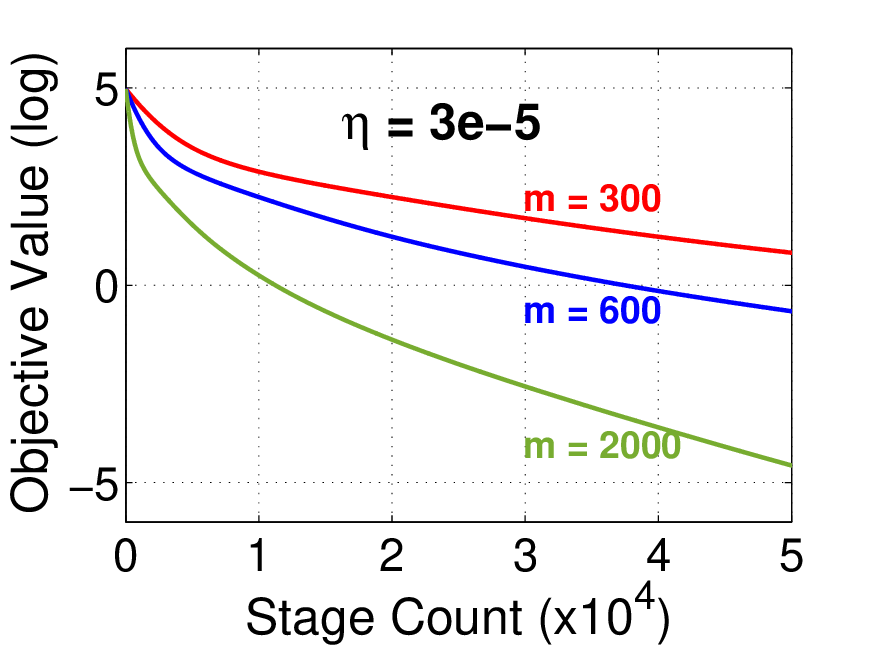}
\caption{{\bf Convergence behavior under small step size.} We observe that as long as we pick a sufficiently large value for $m$, HT-SVRG always converges. This is not surprising since our theorem guarantees for any $\eta < 1/(4L)$, HT-SVRG will converge if $m$ is large enough. Also note that the geometric convergence rate is observed after certain iterations, e.g., for $\eta = 3\times 10^{-5}$, the log(error) decreases linearly after 20 thousands iterations.}
\label{fig:small eta}
\end{figure*}

\subsection{Classification}
In addition to the application of sparse recovery, we illustrated that HT-SVRG can deal with binary classification by minimizing the sparse logistic regression problem~\eqref{eq:logistic}. Here, we study the performance on a realistic image dataset MNIST\footnote{\url{http://yann.lecun.com/exdb/mnist/}}, consisting of 60 thousands training samples and 10 thousands samples for testing. There is one digit on each image of size 28-by-28, hence totally 10 classes. Some of the images are shown in Figure~\ref{fig:mnist_data}.

\begin{figure*}[h]
\centering
\includegraphics[width=0.8\linewidth]{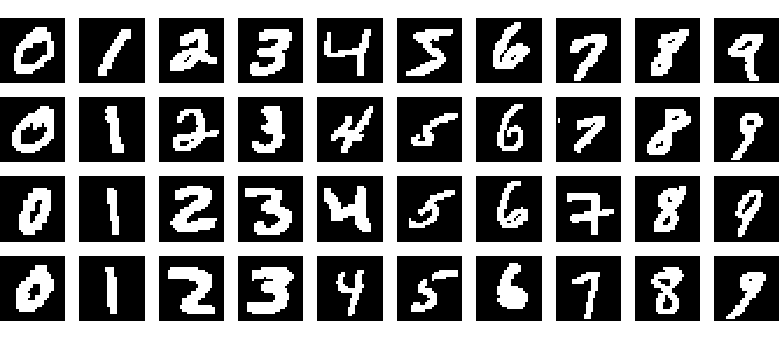}
\caption{{\bf Sample images in the MNIST database.}}
\label{fig:mnist_data}
\end{figure*}

The update frequency $m$ is fixed as $m = 3n$. We compute the heuristic step size $\eta$ as in the previous section, i.e., $\eta = 2/\mathrm{svds}(\bA \bA\trans) \approx 10^{-3}$. Since for the real-world dataset, the true sparsity is actually unknown, we tune the sparsity parameter $k$ and study the performance of the algorithm.% Finally, the $\ell_2$-norm penalty $\gamma$ is set to $10^{-5}$, a small value to ensure vanishing optimization error as suggested by Proposition~\ref{prop:T logistic}.

First, we visualize five pair-wise models learned by HT-SVRG in Figure~\ref{fig:mnist_partial_model}, where each row is associated with a binary classification task indicated by the two digits at the leading of the row, and the subsequent red-blue figures are used to illustrate the learned models under different sparsity parameter. For example, the third colorful figure depicted on the second row corresponds to recognizing a digit is ``1'' or ``7'' with the sparsity $k = 30$. In particular, for each pair, we label the small digit as positive and the large one as negative, and the blue and red pixels are the weights with positive and negative values respectively. Apparently, the models we learned are discriminative.

\begin{figure*}[h]
\centering
\includegraphics[width=0.8\linewidth]{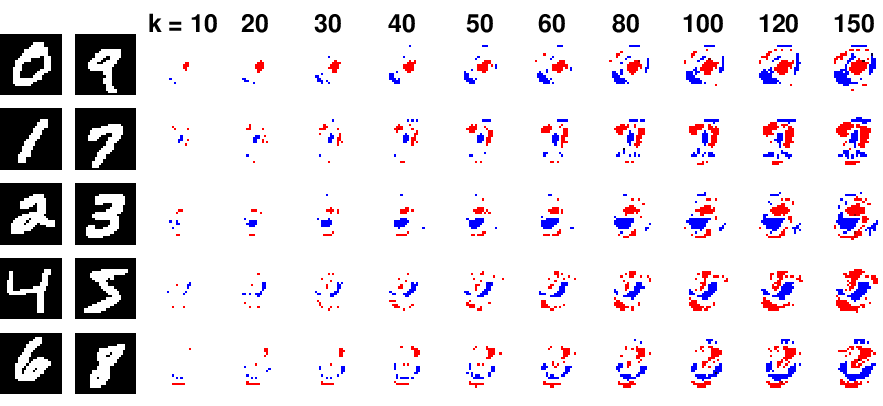}
\caption{{\bf Visualization of the models.} We visualize 5 models learned by HT-SVRG under different choices of sparsity shown on the top of each column. Note that the feature dimension is 784. From the top row to the bottom row, we illustrate the models of ``0 vs 9'', ``1 vs 7'', ``2 vs 3'', ``4 vs 5'' and ``6 vs 8'', where for each pair, we label the small digit as positive and the large one as negative. The red color represents negative weights while the blue pixels correspond with positive weights.}
\label{fig:mnist_partial_model}
\end{figure*}

We also quantitatively show the convergence and prediction accuracy curves in Figure~\ref{fig:mnist_converge_acc}. Note that here, the $y$-axis is the objective value $F(\xtilde^s)$ rather than $\log(F(\xtilde^s) - F(\xopt))$, due to the fact that computing the exact optimum of~\eqref{eq:logistic} is NP-hard. Generally speaking, HT-SVRG converges quite fast and usually attains the minimum of objective value within 20 stages. It is not surprising to see that choosing a large quantity for the sparsity leads to a better (lower) objective value. However, in practice a small assignment for the sparsity, e.g., $k=70$ facilitates an efficient computation while still suffices to ensure fast convergence and accurate prediction.

\begin{figure*}[h]
\centering
\includegraphics[width=0.32\linewidth]{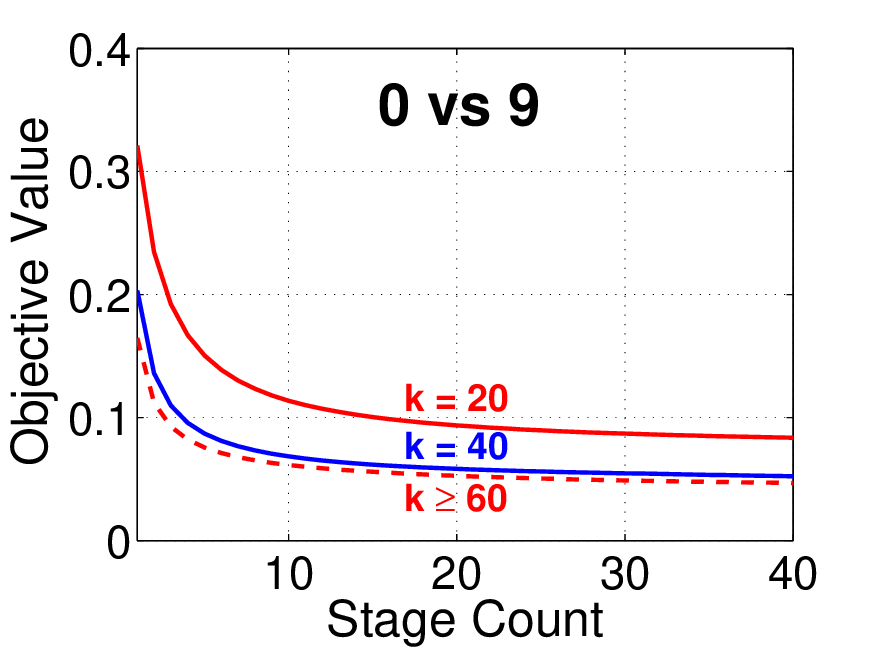}
\includegraphics[width=0.32\linewidth]{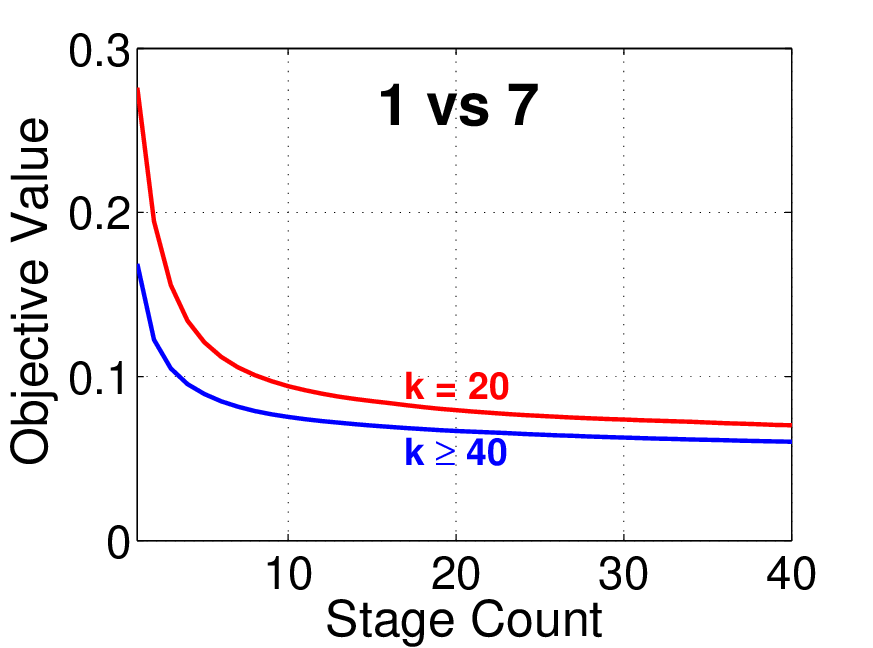}
\includegraphics[width=0.32\linewidth]{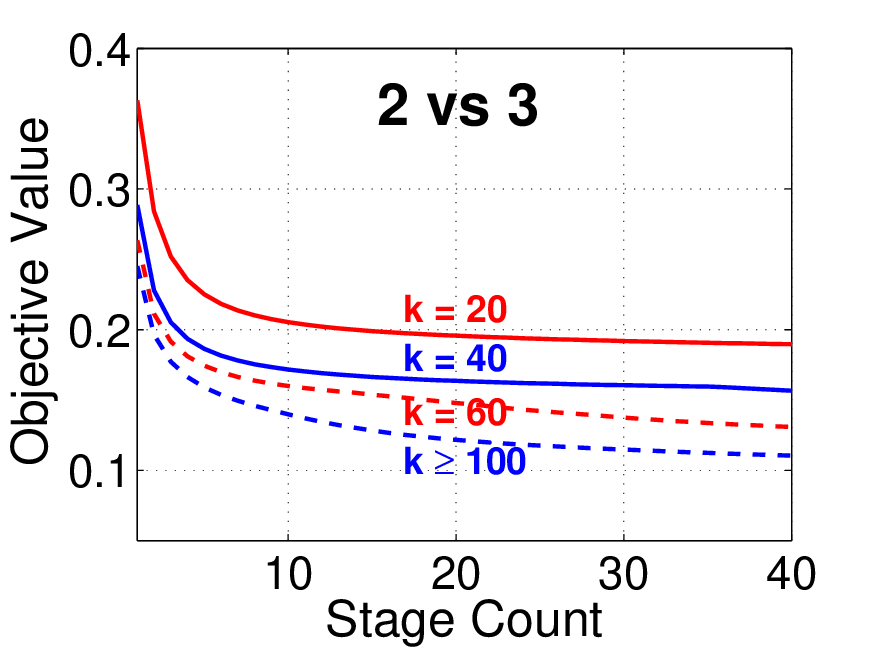}

\includegraphics[width=0.32\linewidth]{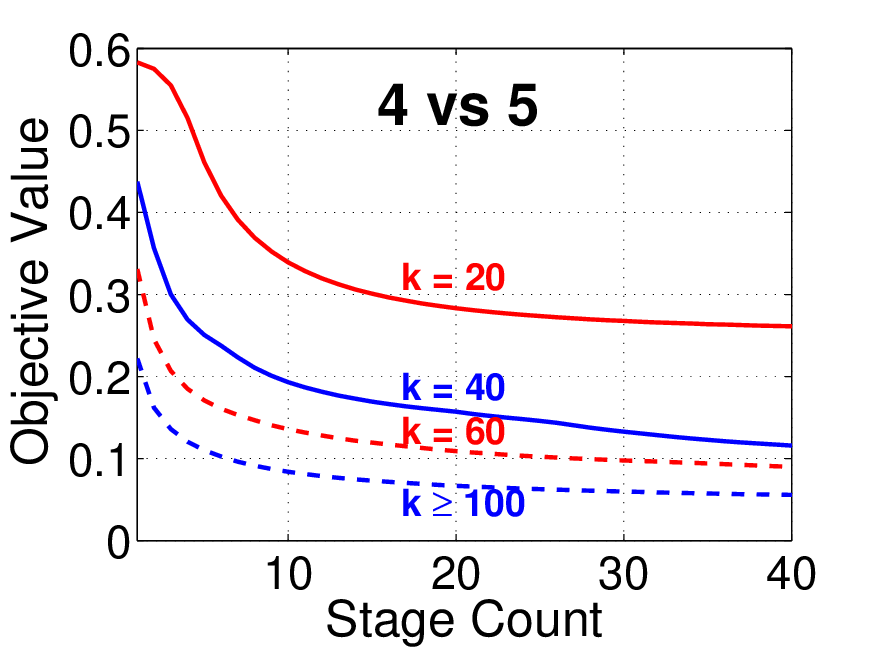}
\includegraphics[width=0.32\linewidth]{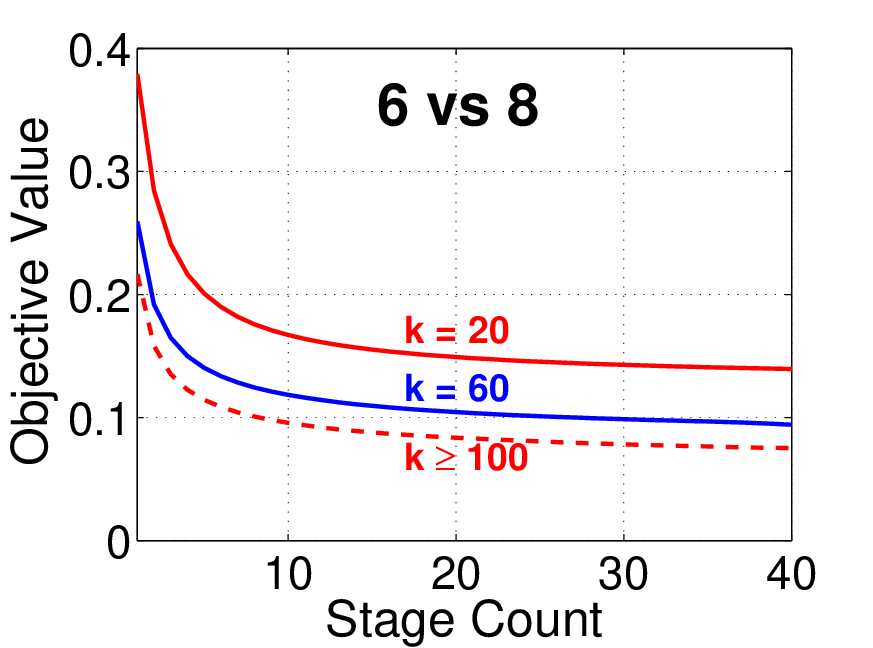}
\includegraphics[width=0.32\linewidth]{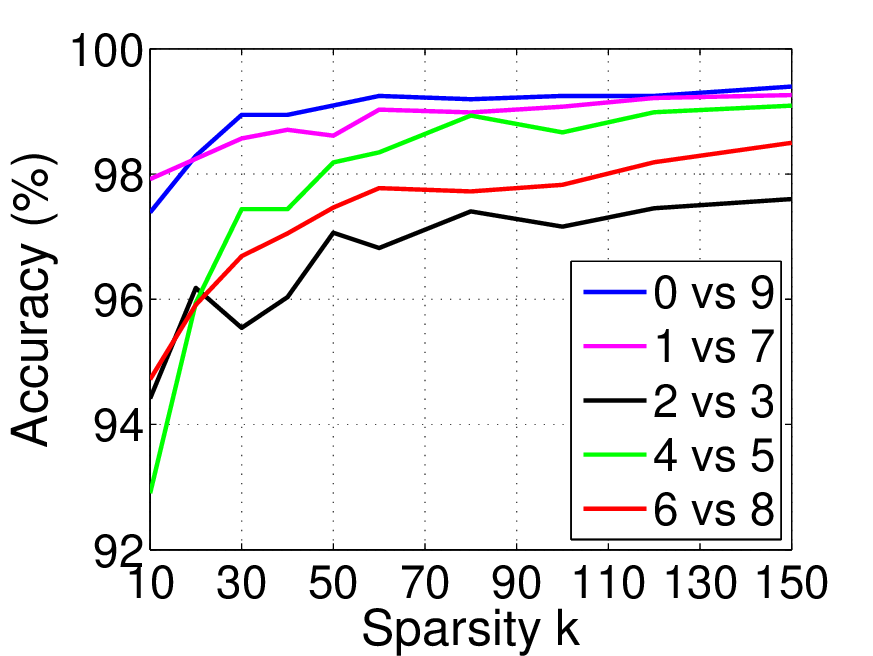}
\caption{{\bf Quantitative results on convergence and accuracy.} The first 5 figures demonstrate the convergence behavior of HT-SVRG for each binary classification task, where curves with different colors represent the objective value against number of stages under different sparsity $k$. Generally speaking, HT-SVRG converges within 20 stages which is a very fast rate. The last figure reflects the classification accuracy against the sparsity for all 5 classification tasks, where we find that for a moderate choice, e.g., $k=70$, it already guarantees an accurate prediction (we recall the dimension is 784).}
\label{fig:mnist_converge_acc}
\end{figure*}

\section{Conclusion and Open Problems}\label{sec:con}
In this paper, we have provided a tight bound on the deviation resulting from the hard thresholding operator, which underlies a vast volume of algorithms developed for sparsity-constrained problems. Our derived bound is universal over all choices of parameters and we have proved that it cannot be improved without further information on the signals. We have discussed the implications of our result to the community of compressed sensing and machine learning, and have demonstrated that the theoretical results of a number of popular algorithms in the literature can be advanced. In addition, we have devised a novel algorithm which tackles the problem of sparse learning in large-scale setting. We have elaborated that our algorithm is guaranteed to produce global optimal solution for prevalent statistical models only when it is equipped with the tight bound, hence justifying that the conventional bound is not applicable in the challenging scenario.

There are several interesting open problems. The first question to ask is whether one can establish sharp RIP condition or sharp phase transition for hard thresholding based algorithms such as IHT and CoSaMP with the tight bound. Moreover, compared to the hard thresholded SGD method~\cite{nguyen2014linear}, HT-SVRG admits a vanishing optimization error. This poses a question of whether we are able to provably show the necessity of variance reduction for such a sparsity-constrained problem.

\subsection*{Acknowledgments}
We would like to thank Jing Wang for insightful discussion since the early stage of the work. We also thank Martin Slawski for helpful discussion on the statistical precision of the problem, and thank Jian Wang for bringing the paper~\cite{nguyen2014linear} into our attention. We appreciate Huan Xu's high level comments on the work. Finally, we thank the anonymous reviewers for a careful check on our proof and for the encouraging comments. The work was partially funded by NSF-Bigdata-1419210 and NSF-III-1360971.

\appendix
\section{Technical Lemmas}
We present some useful lemmas that will be invoked by subsequent analysis. 
%\begin{lemma}\label{lem:gauexp}
%Assume $\theta_1$ and $\theta_2$ are two sub-gaussian random variables with parameter $\sigma_1$ and $\sigma_2$ respectively. Then, the following holds:
%\begin{enumerate}
%\item $\theta_1\theta_2$ is a sub-exponential random variable with parameter $\sigma_1\sigma_2$;
%\item For any real values $x$ and $y$, $x\theta_1 + y\theta_2$ is a sub-gaussian random variable with parameter $\sqrt{x^2\sigma_1^2 + y^2\sigma_2^2}$ if $\theta_1$ and $\theta_2$ are independent.
%\end{enumerate}
%\end{lemma}
%These are standard arguments in high-dimensional probability. See, for example, \cite{vershynin2018high}.
The following is a characterization of the co-coercivity of the objective function $F(\bx)$. A similar result was obtained in~\cite{nguyen2014linear} but we present a refined analysis which is essential for our purpose.
\begin{lemma}\label{lem:co}
For a given support set $\Omega$, assume that the continuous function $F(\bx)$ is $L_{\abs{\Omega}}$-RSS and is $\alpha_K$-RSC for some sparsity level $K$. Then, for all vectors $\bx$ and $\bx'$ with $\abs{ \supp{\bx - \bx'} \cup \Omega } \leq K$,
\begin{equation*}
\twonorm{ \nabla_{\Omega} F(\bx') - \nabla_{\Omega} F(\bx) }^2 \leq 2 L_{\abs{\Omega}} \big( F(\bx') - F(\bx) - \inner{\nabla F(\bx)}{\bx' - \bx} \big).
\end{equation*}
\end{lemma}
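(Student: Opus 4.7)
My plan is to mimic the standard proof of co-coercivity for smooth convex functions, but to localize every step to the appropriate sparse support set so that only the \emph{restricted} smoothness and strong convexity assumptions need be invoked. Concretely, I introduce the auxiliary function
\begin{align*}
h(\bz) \defeq F(\bz) - \inner{\nabla F(\bw)}{\bz},
\end{align*}
which inherits $M_{\abs{\Omega}}$-RSS and $m_K$-RSC from $F$ (adding a linear term does not change second-order behavior), and which satisfies $\nabla h(\bw) = \bzero$. The desired inequality becomes
\begin{align*}
\twonorm{\nabla_{\Omega} h(\bw')}^2 \leq 2 M_{\abs{\Omega}} \bigl( h(\bw') - h(\bw) \bigr),
\end{align*}
since $h(\bw') - h(\bw) = F(\bw') - F(\bw) - \inner{\nabla F(\bw)}{\bw' - \bw}$ and $\nabla_{\Omega} h(\bw') = \nabla_{\Omega} F(\bw') - \nabla_{\Omega} F(\bw)$.

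Next I perform one step of projected gradient descent on $h$ from $\bw'$, restricted to $\Omega$: define
\begin{align*}
\bz' \defeq \bw' - \frac{1}{M_{\abs{\Omega}}} \nabla_{\Omega} h(\bw').
\end{align*}
By construction, $\supp{\bz' - \bw'} \subseteq \Omega$, so $\zeronorm{\bz' - \bw'} \leq \abs{\Omega}$. The RSS bound on $h$ between $\bw'$ and $\bz'$, together with the identity $\inner{\nabla h(\bw')}{\bz' - \bw'} = -\tfrac{1}{M_{\abs{\Omega}}} \twonorm{\nabla_{\Omega} h(\bw')}^2$ (obtained via Lemma~\ref{lem:support}, since $\bz' - \bw'$ is supported in $\Omega$), yields the descent inequality
\begin{align*}
h(\bz') \leq h(\bw') - \frac{1}{2 M_{\abs{\Omega}}} \twonorm{\nabla_{\Omega} h(\bw')}^2.
\end{align*}

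To close the argument I need $h(\bw) \leq h(\bz')$. This is where the RSC hypothesis enters, and is the delicate step in the proof: because RSC is only a \emph{restricted} notion of convexity, I have to verify that $\bw$ and $\bz'$ differ on a small enough support. The bookkeeping is
\begin{align*}
\supp{\bz' - \bw} \subseteq \supp{\bw' - \bw} \cup \supp{\bz' - \bw'} \subseteq \supp{\bw' - \bw} \cup \Omega,
\end{align*}
which by the hypothesis of the lemma has cardinality at most $K$. Thus the $m_K$-RSC of $h$ applies at $(\bw, \bz')$, and combined with $\nabla h(\bw) = \bzero$ and $m_K \geq 0$ it gives $h(\bz') \geq h(\bw) + \tfrac{m_K}{2} \twonorm{\bz' - \bw}^2 \geq h(\bw)$. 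Chaining this with the descent inequality above and rearranging produces exactly the claimed bound. The main obstacle is precisely this support-tracking: one has to choose the gradient step so that $\bz' - \bw'$ is supported in $\Omega$ (to invoke RSS with parameter $M_{\abs{\Omega}}$), while simultaneously keeping $\bz' - \bw$ within the $K$-sparse regime on which RSC is assumed; the hypothesis $\abs{\supp{\bw - \bw'} \cup \Omega} \leq K$ is exactly what makes both constraints compatible.
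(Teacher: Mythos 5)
Your proposal is correct and follows essentially the same route as the paper's proof: the same auxiliary function $G(\bz)=F(\bz)-\inner{\nabla F(\bw)}{\bz}$, the same restricted gradient step $\bw'-\tfrac{1}{M_{\abs{\Omega}}}\nabla_{\Omega}G(\bw')$, the same use of RSS for the descent inequality and of RSC (with the hypothesis $\abs{\supp{\bw-\bw'}\cup\Omega}\leq K$) to conclude $G(\bw)\leq G(\bz')$. The support-tracking step you flag as delicate is exactly the point the paper's proof also hinges on, and you handle it correctly.
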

\begin{proof}
We define an auxiliary function
\begin{equation*}
G(\bw) \defeq F(\bw) - \inner{\nabla F(\bx)}{\bw}.
\end{equation*}
For all vectors $\bw$ and $\bw'$, we have
\begin{equation*}
\twonorm{\nabla G(\bw) - \nabla G(\bw')} = \twonorm{\nabla F(\bw) - \nabla F(\bw')}  \leq L_{\abs{\supp{\bw - \bw'}}} \twonorm{\bw - \bw'},
\end{equation*}
which is equivalent to
\begin{equation}\label{eq:tmp:1}
G(\bw) - G(\bw') - \inner{\nabla G(\bw')}{\bw - \bw'} \leq \frac{L_{r}}{2} \twonorm{\bw - \bw'}^2,
\end{equation}
where $r \defeq \abs{\supp{\bw - \bw'}}$. On the other hand, due to the RSC property of $F(\bx)$, we obtain
\begin{equation*}
G(\bw) - G(\bx) = F(\bw) - F(\bx) - \inner{\nabla F(\bx)}{\bw - \bx} \geq \frac{\alpha_{\abs{\supp{\bw - \bx}}}}{2} \twonorm{\bw - \bx}^2 \geq 0,
\end{equation*}
provided that $\abs{\supp{\bw - \bx}} \leq K$. For the given support set $\Omega$, we pick $\bw = \bx' - \frac{1}{L_{\abs{\Omega}}} \nabla_{\Omega} G(\bx')$. Clearly, for such a choice of $\bw$, we have ${\supp{\bw - \bx}} = \supp{\bx - \bx'} \cup \Omega$. Hence, by assuming that $\abs{\supp{\bx - \bx'} \cup \Omega }$ is not larger than $K$, we get
\begin{align*}
G(\bx) \leq&\ G\(\bx' - \frac{1}{L_{\abs{\Omega}}} \nabla_{\Omega} G(\bx')\)\\
{\leq}&\ G(\bx') + \inner{\nabla G(\bx')}{- \frac{1}{L_{\abs{\Omega}}} \nabla_{\Omega} G(\bx')}  + \frac{1}{2 L_{\abs{\Omega}}} \twonorm{\nabla_{\Omega} G(\bx')}^2\\
=&\ G(\bx') - \frac{1}{2 L_{\abs{\Omega}}} \twonorm{\nabla_{\Omega} G(\bx')}^2,
\end{align*}
where the second inequality follows from~\eqref{eq:tmp:1}. Now expanding $\nabla_{\Omega} G(\bx')$ and rearranging the terms gives the desired result.
\end{proof}

\begin{lemma}\label{lem:vt norm}
Consider the HT-SVRG algorithm for a fixed stage $s$. Let $\xhat$ be the target sparse vector. Let $\Omega$ be a support set such that $\supp{\bx^{t-1}} \cup \supp{\xtilde} \cup \supp{\xhat} \subseteq \Omega$. Put $r = \abs{\Omega}$. Assume \ref{as:rss}. For all $1 \leq t \leq m$ , denote $\bv^t = \nabla f_{i_t}(\bx^{t-1}) - \nabla f_{i_t}(\xtilde) + \mutilde$. Then we have the following:
\begin{align*}
{\E}_{i_t | \bx^{t-1}} \Big[\twonorm{\PO{\bv^t}}^2 \Big] \leq&\ 4 L_r \left[ F(\bx^{t-1}) - F(\xhat)\right] + 4 L_r \left[ F(\xtilde) - F(\xhat) \right] \notag\\
&\ -4L_r \inner{\nabla F(\xhat)}{\bx^{t-1} + \xtilde - 2 \xhat} + 4 \twonorm{\PO{\nabla F(\xhat)}}^2.
\end{align*}
\end{lemma}
\begin{proof}
We have
\begin{align*}
\twonorm{\PO{\bv^t}}^2 =&\ \twonorm{\PO{ \nabla f_{i_t}(\bx^{t-1}) - \nabla f_{i_t}(\xtilde) + \mutilde }}^2\\
\leq&\ 2 \twonorm{ \PO{\nabla f_{i_t}(\bx^{t-1}) - \nabla f_{i_t}(\xhat) } }^2 + 2 \twonorm{ \PO{\nabla f_{i_t}(\xtilde) - \nabla f_{i_t}(\xhat) -\mutilde} }^2\\
=&\ 2 \twonorm{ \PO{\nabla f_{i_t}(\bx^{t-1}) - \nabla f_{i_t}(\xhat) } }^2 +  2 \twonorm{ \PO{\nabla f_{i_t}(\xtilde) - \nabla f_{i_t}(\xhat)} }^2 \notag\\
&\ + 2 \twonorm{\PO{\mutilde}}^2 - 4 \inner{\PO{\nabla f_{i_t}(\xtilde) - \nabla f_{i_t}(\xhat)} }{ \PO{\mutilde}}\\
\stackrel{\xi_1}{=}&\ 2 \twonorm{ \PO{\nabla f_{i_t}(\bx^{t-1}) - \nabla f_{i_t}(\xhat) } }^2 +  2 \twonorm{ \PO{\nabla f_{i_t}(\xtilde) - \nabla f_{i_t}(\xhat)} }^2 \notag\\
&\ +  2 \twonorm{\PO{\mutilde}}^2 - 4 \inner{{\nabla f_{i_t}(\xtilde) - \nabla f_{i_t}(\xhat)} }{ \PO{\mutilde}}\\
\stackrel{\xi_2}{\leq}&\ 4L_r\left[ f_{i_t}(\bx^{t-1}) -  f_{i_t}(\xhat) - \inner{\nabla f_{i_t}(\xhat)}{\bx^{t-1} - \xhat} \right] \notag\\
&\ + 4L_r\left[ f_{i_t}(\xtilde) -  f_{i_t}(\xhat) - \inner{\nabla f_{i_t}(\xhat)}{\xtilde - \xhat} \right] \notag\\
&\ +  2 \twonorm{\PO{\mutilde}}^2 - 4 \inner{{\nabla f_{i_t}(\xtilde) - \nabla f_{i_t}(\xhat)} }{ \PO{\mutilde}},
\end{align*}
where $\xi_1$ is by algebra, $\xi_2$ applies Lemma~\ref{lem:co} and the fact that $\abs{\Omega} = r$.

Taking the conditional expectation, we obtain the following:
\begin{align*}
&\ {\E}_{i_t | \bx^{t-1}} \Big[ \twonorm{\PO{\bv^t}}^2\Big]\\
 \leq&\ 4 L_r \left[ F(\bx^{t-1}) - F(\xhat)\right] + 4 L_r \left[ F(\xtilde) - F(\xhat) \right]\notag\\
&\ -4L_r \inner{\nabla F(\xhat)}{\bx^{t-1} + \xtilde - 2\xhat} + 2\inner{2\PO{\nabla F(\xhat)} - \PO{\mutilde} }{\PO{\mutilde}}\\
=&\ 4 L_r \left[ F(\bx^{t-1}) - F(\xhat)\right] + 4 L_r \left[ F(\xtilde) - F(\xhat) \right]\notag\\
&\ -4L_r \inner{\nabla F(\xhat)}{\bx^{t-1} + \xtilde - 2\xhat} + \twonorm{2 \PO{\nabla F(\xhat)}}^2 \notag\\
&\ - \twonorm{2\PO{\nabla F(\xhat)} - \PO{\mutilde}}^2 - \twonorm{\PO{\mutilde}}^2\\
\leq&\ 4 L_r \left[ F(\bx^{t-1}) - F(\xhat)\right] + 4 L_r \left[ F(\xtilde) - F(\xhat) \right]\notag\\
&\ -4L_r \inner{\nabla F(\xhat)}{\bx^{t-1} + \xtilde - 2\xhat} + 4 \twonorm{\PO{\nabla F(\xhat)}}^2.
\end{align*}
The proof is complete.
\end{proof}

\begin{corollary}\label{cor:vt norm clean}
Assume the same conditions as in Lemma~\ref{lem:vt norm}. If $\nabla F(\xhat) = 0$, we have
\begin{equation*}
{\E}_{i_t | \bx^{t-1}} \Big[ \twonorm{\PO{\bv^t}}^2 \Big] \leq 4L_r \left[ F(\bx^{t-1})+ F(\xtilde) - 2F(\xhat) \right].
\end{equation*}
\end{corollary}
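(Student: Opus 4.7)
The plan is to simply specialize Lemma~\ref{lem:vt norm} to the case $\nabla F(\hx) = 0$. The lemma already gives a bound of the form
\begin{align*}
{\E}_{i_t | \bx^{t-1}} \twonorm{\PO{\bv^t}}^2 \leq 4 L_r \big[ F(\bx^{t-1}) - F(\hx)\big] + 4 L_r \big[ F(\tx) - F(\hx) \big] -4L_r \inner{\nabla F(\hx)}{\bx^{t-1} + \tx} + 4 \twonorm{\PO{\nabla F(\hx)}}^2,
\end{align*}
so the only thing left is to kill the last two terms. First I would invoke the hypothesis $\nabla F(\hx) = 0$ to observe that both $\inner{\nabla F(\hx)}{\bx^{t-1} + \tx}$ and $\twonorm{\PO{\nabla F(\hx)}}^2$ are identically zero. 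Then I would combine the two surviving summands by factoring out $4L_r$ to get $4L_r[F(\bx^{t-1}) + F(\tx) - 2F(\hx)]$, which matches the claimed inequality exactly.

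There is essentially no obstacle here: the corollary is a one-line specialization of the parent lemma, used later to simplify the analysis whenever the global optimum $\hx$ is also a stationary point of the unconstrained problem (which is the regime covered by Corollary~\ref{cor:main}). The only thing worth briefly remarking on in the write-up is that the conditions of Lemma~\ref{lem:vt norm}, namely $\supp{\bx^{t-1}} \cup \supp{\tx} \cup \supp{\hx} \subseteq \Omega$ and the RSS property of each $f_i$ on sparsity level $r = \abs{\Omega}$, are inherited verbatim, so no additional assumptions are introduced. Hence the proof will be a two-sentence substitution argument rather than a new derivation.
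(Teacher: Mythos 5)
Your proposal is correct and is exactly how the paper treats this corollary: it is an immediate specialization of Lemma~\ref{lem:vt norm}, where $\nabla F(\hx)=0$ annihilates both the inner-product term and the term $4\twonorm{\PO{\nabla F(\hx)}}^2$, and the two remaining summands combine to $4L_r[F(\bx^{t-1})+F(\tx)-2F(\hx)]$. Nothing further is needed.
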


\section{Proofs for Section~\ref{sec:key}}\label{sec:app:proofkey}
\subsection{Proof of Theorem~\ref{thm:key}}
\begin{proof}
The result is true for the trivial case that $\bb$ is a zero vector. In the following, we assume that $\bb$ is not a zero vector. Denote
\begin{equation*}
\bw \defeq \Hk{\bb}.
\end{equation*}
Let $\Omega$ be the support set of $\bw$ and let $\overline{\Omega}$ be its complement. We immediately have $\PO{\bb} = \bw$.

Let $\Omega'$ be the support set of $\bx$. For the sake of simplicity, let us split the vector $\bb$ as follows:
\begin{align*}
\bb_1 = \mathcal{P}_{\Omega \backslash \Omega'}\( \bb \),\quad& \bb_2 = \mathcal{P}_{\Omega \cap \Omega'}\( \bb \),\\
\bb_3 = \mathcal{P}_{\overline{\Omega} \backslash \Omega'}\( \bb \),\quad& \bb_4 = \mathcal{P}_{\overline{\Omega} \cap \Omega'}\( \bb \).
\end{align*}
Likewise, we denote
\begin{align*}
&\bw_1 = \mathcal{P}_{\Omega \backslash \Omega'}\( \bw \),\quad \bw_2 = \mathcal{P}_{\Omega \cap \Omega'}\( \bw \),\quad \bw_3 = \mathcal{P}_{\overline{\Omega} \backslash \Omega'}( \bw ) = \bzero,\quad \bw_4 = \mathcal{P}_{\overline{\Omega} \cap \Omega'}\( \bw \) = \bzero,\\
&\bx_1 = \mathcal{P}_{\Omega \backslash \Omega'}\( \bx \) = \bzero,\quad \bx_2 = \mathcal{P}_{\Omega \cap \Omega'}\( \bx \),\quad \bx_3 = \mathcal{P}_{\overline{\Omega} \backslash \Omega'}( \bx ) = \bzero,\quad \bx_4 = \mathcal{P}_{\overline{\Omega} \cap \Omega'}\( \bx \).
\end{align*}
Due to the hard thresholding, we have
\begin{equation*}
\bw_1 = \bb_1,\quad \bw_2 = \bb_2.
\end{equation*}
In this way, by simple algebra we have
\begin{align*}
\twonorm{\bw - \bx}^2 &=\twonorm{\bb_1}^2 + \twonorm{\bb_2 - \bx_2}^2 + \twonorm{\bx_4}^2,\\
\twonorm{\bb - \bx}^2 &= \twonorm{\bb_1}^2 + \twonorm{\bb_2 - \bx_2}^2 + \twonorm{\bb_3}^2 + \twonorm{\bb_4 - \bx_4}^2.
\end{align*}

Our goal is to estimate the maximum of $\twonorm{\bw - \bx}^2 / \twonorm{\bb - \bx}^2$. It is easy to show that when attaining the maximum value, $\twonorm{\bb_3}$ must be zero since otherwise one may decrease this term to make the objective larger. Hence, maximizing $\twonorm{\bw - \bx}^2 / \twonorm{\bb -\bx}^2$ amounts to estimating the upper bound of the following over all choices of $\bx$ and $\bb$:
\begin{equation}\label{eq:t}
\gamma \defeq  \frac{\twonorm{\bb_1}^2 + \twonorm{\bb_2 - \bx_2}^2 + \twonorm{\bx_4}^2}{\twonorm{\bb_1}^2 + \twonorm{\bb_2 - \bx_2}^2  + \twonorm{\bb_4 - \bx_4}^2}.
\end{equation}

Firstly, we consider the case of $\twonorm{\bb_1} = 0$, which means $\Omega = \Omega'$ implying $\gamma=1$. In the following, we consider $\twonorm{\bb_1} \neq 0$. In particular, we consider $\gamma > 1$ since we are interested in the maximum value of $\gamma$.

Arranging \eqref{eq:t} we obtain
\begin{equation}\label{eq:G(x)=0}
(\gamma-1) \twonorm{\bb_2 - \bx_2}^2 + \gamma \twonorm{\bb_4 - \bx_4}^2 - \twonorm{\bx_4}^2 + (\gamma-1) \twonorm{\bb_1}^2 = 0.
\end{equation}
Let us fix $\bb$ and define the function
\begin{equation*}
G(\bx_2, \bx_4) = (\gamma-1) \twonorm{\bb_2 - \bx_2}^2 + \gamma \twonorm{\bb_4 - \bx_4}^2 - \twonorm{\bx_4}^2 + (\gamma-1) \twonorm{\bb_1}^2.
\end{equation*}
Thus, \eqref{eq:G(x)=0} indicates that $G(\bx_2, \bx_4)$ can attain the objective value of zero. Note that $G(\bx_2, \bx_4)$ is a quadratic function and its gradient and Hessian matrix can be computed as follows:
\begin{align*}
\frac{\partial}{\partial \bx_2} G(\bx_2, \bx_4) &= 2(\gamma-1)(\bx_2 - \bb_2),\\
\frac{\partial}{\partial \bx_4} G(\bx_2, \bx_4) &= 2\gamma(\bx_4 - \bb_4) - 2\bx_4,\\
\nabla^2 G(\bx_2, \bx_4) &= 2(\gamma-1) \bI,
\end{align*}
where $\bI$ is the identity matrix. Since the Hessian matrix is positive definite, $G(\bx_2, \bx_4)$ attains the global minimum at the stationary point, which is given by
\begin{equation*}
\bx_2^* = \bb_2,\quad \bx_4^* = \frac{\gamma}{\gamma-1} \bb_4,
\end{equation*}
resulting in the minimum objective value
\begin{equation*}
G(\bx_2^*, \bx_4^*) = \frac{\gamma}{1-\gamma} \twonorm{\bb_4}^2 + (\gamma-1) \twonorm{\bb_1}^2.
\end{equation*}
In order to guarantee the feasible set of \eqref{eq:G(x)=0} is non-empty, we require that
\begin{equation*}
G(\bx_2^*, \bx_4^*) \leq 0,
\end{equation*}
implying
\begin{equation*}
\twonorm{\bb_1}^2 \gamma^2 - (2\twonorm{\bb_1}^2 + \twonorm{\bb_4}^2) \gamma + \twonorm{\bb_1}^2 \leq 0.
\end{equation*}
Solving the above inequality with respect to $\gamma$, we obtain
\begin{equation}\label{eq:t est1}
\gamma \leq 1 + \frac{\twonorm{\bb_4}^2 + \sqrt{\(4\twonorm{\bb_1}^2+\twonorm{\bb_4}^2\) \twonorm{\bb_4}^2}}{2\twonorm{\bb_1}^2}.
\end{equation}
To derive an upper bound that is uniform over the choice of $\bb$, we recall that $\bb_1$ contains the largest absolute elements of $\bb$ while $\bb_4$ has smaller values. In particular, the averaged value of $\bb_4$ is no greater than that of $\bb_1$ in magnitude, i.e.,
\begin{equation*}
\frac{\twonorm{\bb_4}^2}{\zeronorm{\bb_4}} \leq \frac{\twonorm{\bb_1}^2}{\zeronorm{\bb_1}}.
\end{equation*}
Note that $\zeronorm{\bb_1} = k - \zeronorm{\bb_2} = k - (K - \zeronorm{\bb_4})$. Hence, combining with the fact that $0 \leq \zeronorm{\bb_4} \leq \min\{K, d-k\}$ and optimizing over $\zeronorm{\bb_4}$ gives
\begin{equation*}
\twonorm{\bb_4}^2 \leq \frac{\min\{K, d-k\}}{k - K + \min\{K, d-k\}} \twonorm{\bb_1}^2.
\end{equation*}
Plugging back to \eqref{eq:t est1}, we finally obtain
\begin{equation*}
\gamma \leq 1 + \frac{\rho + \sqrt{\(4 + \rho \) \rho } }{2},\quad \rho =  \frac{\min\{K, d-k\}}{k - K + \min\{K, d-k\}}.
\end{equation*}
The proof is complete.
\end{proof}

\section{Proofs for Section~\ref{sec:imp}}\label{sec:app:proofimp}
\subsection{Proof of Theorem~\ref{thm:iht}}

We follow the proof pipeline of~\cite{blumensath2009iterative} and only remark the difference of our proof and theirs, i.e., where Theorem~\ref{thm:key} applies. In case of possible confusion due to notation, we follow the symbols in Blumensath and Davies. One may refer to that article for a complete proof.

The first difference occurs in Eq.~(22) of~\cite{blumensath2009iterative}, where they reached
\begin{equation*}
\textrm{(Old)}\quad \twonorm{\bx^s - \bx^{[n+1]}} \leq 2 \twonorm{\bx^s_{B^{n+1}} - \ba^{[n+1]}_{B^{n+1}}},
\end{equation*}
while Theorem~\ref{thm:key} gives
\begin{equation*}
\textrm{(New)}\quad \twonorm{\bx^s - \bx^{[n+1]}} \leq \sqrt{\nu} \twonorm{\bx^s_{B^{n+1}} - \ba^{[n+1]}_{B^{n+1}}}.
\end{equation*}
Combining this new inequality and Eq.~(23) therein, we obtain
\begin{equation*}
\twonorm{\bx^s - \bx^{[n+1]}} \leq \sqrt{\nu} \twonorm{(\bI - \bPhi\trans_{B^{n+1}} \bPhi_{B^{n+1}} ) \br^{[n]}_{B^{n+1}}} + \sqrt{\nu} \twonorm{(\bPhi\trans_{B^{n+1}} \bPhi_{B^{n+1} \backslash B^{n+1}} ) \br^{[n]}_{B^{n+1} \backslash B^{n+1}}}.
\end{equation*}
By noting the fact that $\abs{B^n \cup B^{n+1}} \leq 2s + s^*$ where $s^*$ denotes the sparsity of the global optimum and following their reasoning of Eq.~(24) and (25), we have a new bound for Eq.~(26):
\begin{equation*}
\textrm{(New)}\quad \twonorm{\br^{[n+1]}} \leq \sqrt{2\nu} \delta_{2s+s^*} \twonorm{\br^{[n]}} + \sqrt{(1+\delta_{s+s^*})\nu} \twonorm{\be}.
\end{equation*}
Now our result follows by setting the coefficient of $\twonorm{\br^{[n]}}$ to $0.5$. Note that specifying $\nu=4$ gives the result of~\cite{blumensath2009iterative}.

\subsection{Proof of Theorem~\ref{thm:cosamp}}

We follow the proof technique of Theorem 6.27 in~\cite{foucart2013mathematical} which gives the best known RIP condition for the CoSaMP algorithm to date. Since most of the reasoning is similar, we only point out the difference of our proof and theirs, i.e., where Theorem~\ref{thm:key} applies. In case of confusion by notation, we follow the symbols used in~\cite{foucart2013mathematical}. The reader may refer to that book for a complete proof.

The first difference is in Eq.~(6.49) of~\cite{foucart2013mathematical}. Note that to derive this inequality, Foucart and Rauhut invoked the conventional bound~\eqref{eq:old}, which gives
\begin{equation*}
	\textrm{(Old)}\quad \twonorm{\bx_S - \bx^{n+1}}^2 \leq \twonorm{(\bx_S - \bu^{n+1})_{\overline{U^{n+1}}}}^2 + 4 \twonorm{(\bx_S - \bu^{n+1})_{U^{n+1}}}^2,
\end{equation*}
while utilizing Theorem~\ref{thm:key} gives
\begin{equation*}
	\textrm{(New)}\quad \twonorm{\bx_S - \bx^{n+1}}^2 \leq \twonorm{(\bx_S - \bu^{n+1})_{\overline{U^{n+1}}}}^2 + \nu \twonorm{(\bx_S - \bu^{n+1})_{U^{n+1}}}^2.
\end{equation*}
Combining this new inequality with Eq.~(6.50) and Eq.~(6.51) therein, we obtain
\begin{align*}
	\twonorm{\bx_S - \bx^{n+1}} \leq&\ \sqrt{2}\delta_{3s+s^*} \sqrt{\frac{1+(\nu-1)\delta_{3s+s^*}^2}{1-\delta_{3s+s^*}^2}} \twonorm{\bx^n - \bx_S} \\
	& + \sqrt{2}\delta_{3s+s^*} \sqrt{\frac{1+(\nu-1)\delta_{3s+s^*}^2}{1-\delta_{3s+s^*}^2}} \twonorm{(\bA^*\be')_{(S\cup S^n)\Delta T^{n+1}}}\\
	& + \frac{2}{1 - \delta_{3s+s^*}} \twonorm{(\bA^*\be')_{U^{n+1}}},
\end{align*}
where $s^*$ denotes the sparsity of the optimum. Our new bound follows by setting the coefficient of $\twonorm{\bx^n - \bx_S}$ to $0.5$ and solving the resultant equation. Note that setting $\nu=4$ gives the old bound of Foucart and Rauhut.

\section{Proofs for Section~\ref{sec:alg}}\label{sec:app:proofalg}
\subsection{Proof of Theorem~\ref{thm:general}}\label{sec:app:proofgeneral}
\begin{proof}
Fix a stage $s$. Let us denote
\begin{equation*}
\bv^t = \nabla f_{i_t}(\bx^{t-1}) - \nabla f_{i_t}(\xtilde) + \mutilde,
\end{equation*}
so that
\begin{equation*}
\bb^t = \bx^{t-1} - \eta \bv^t.
\end{equation*}
By specifying $\Omega = \supp{\bx^{t-1}} \cup \supp{\bx^t} \cup \supp{\xtilde} \cup \supp{\xhat}$, it follows that
\begin{equation*}
\br^t = \Hk{\bb^t} = \Hk{\PO{\bb^t}}.
\end{equation*}
Thus, the Euclidean distance of $\bx^t$ and $\xhat$ can be bounded as follows:
\begin{equation}\label{eq:app1}
\twonorm{\bx^t - \xhat}^2 \leq \twonorm{\br^t - \xhat}^2 = \twonorm{\Hk{\PO{\bb^t}} - \xhat }^2 \leq \nu \twonorm{\PO{\bb^t} - \xhat}^2,
\end{equation}
where the first inequality holds because $\bx^t = \Pi_{\omega}(\br^t)$ and $\twonorm{\xhat} \leq \omega$. We also have
\begin{align*}
\twonorm{\PO{\bb^t} - \xhat}^2 &= \twonorm{\bx^{t-1} - \xhat - \eta \PO{ \bv^t}}^2\\
&= \twonorm{\bx^{t-1} - \xhat}^2 + \eta^2 \twonorm{\PO{\bv^t}}^2 - 2\eta \inner{\bx^{t-1} - \xhat}{\bv^t},
\end{align*}
where the second equality uses the fact that $\inner{\bx^{t-1} - \xhat}{\PO{\bv^t}} = \inner{\bx^{t-1} - \xhat}{{\bv^t}}$. The first term will be preserved for mathematical induction. The third term is easy to manipulate thanks to the unbiasedness of $\bv^t$. For the second term, we use Lemma~\ref{lem:vt norm} to bound it. Put them together, conditioning on $\bx^{t-1}$ and taking the expectation over $i_t$ for~\eqref{eq:app1}, we have
\begin{align*}
&\ {\E}_{i_t|\bx^{t-1}} \Big[ \twonorm{\bx^t - \xhat}^2 \Big]\notag\\
\stackrel{\xi_1}{\leq}&\ \nu \twonorm{\bx^{t-1} - \xhat}^2 + 4\nu \eta^2 L \left[ F(\bx^{t-1}) - F(\xhat) + F(\xtilde) - F(\xhat) \right] - 2 \nu \eta \inner{\bx^{t-1} - \xhat}{\nabla F(\bx^{t-1})}\notag\\
&\ -4\nu \eta^2 L \inner{\nabla F(\xhat)}{\bx^{t-1} + \xtilde - 2 \xhat} + 4\nu \eta^2 \twonorm{\PO{\nabla F(\xhat)}}^2 \\
\stackrel{\xi_2}{\leq}&\ \nu(1 - \eta \alpha) \twonorm{\bx^{t-1} - \xhat}^2 - 2\nu \eta (1 - 2\eta L) \left[ F(\bx^{t-1}) - F(\xhat) \right] + 4\nu \eta^2 L \left[ F(\xtilde) - F(\xhat) \right]\notag\\
&\ + 4\nu \eta^2 L \twonorm{\PO{\nabla F(\xhat)}} \cdot \twonorm{\bx^{t-1} + \xtilde - 2\xhat } + 4\nu \eta^2 \twonorm{\PO{\nabla F(\xhat)}}^2\\
\leq&\ \nu(1 - \eta \alpha) \twonorm{\bx^{t-1} - \xhat}^2 - 2\nu \eta (1 - 2\eta L) \left[ F(\bx^{t-1}) - F(\xhat) \right] \notag\\
&\ + 4\nu \eta^2 L \left[ F(\xtilde) - F(\xhat) \right] + 4\nu \eta^2 Q' (4 L\omega + Q')
\end{align*}
where $\xi_1$ applies Lemma~\ref{lem:vt norm}, $\xi_2$ applies Assumption~\ref{as:rsc} and we write $Q' := \twonorm{\nabla_{3k+K} F(\xhat)}$ for brevity.

Now summing over the inequalities over $t = 1, 2, \cdots, m$, conditioning on $\xtilde$ and taking the expectation with respect to $\mathcal{I}^s = \{ i_1, i_2, \cdots, i_m \}$, we have
\begin{align}\label{eq:chain}
&\ {\E}_{\mathcal{I}^s|\xtilde} \Big[ \twonorm{\bx^m - \xhat}^2 \Big] \notag\\
\leq&\ \left[\nu(1 - \eta \alpha) - 1\right] {\E}_{\mathcal{I}^s|\xtilde} \sum_{t=1}^{m} \twonorm{\bx^{t-1} - \xhat}^2 + \twonorm{\bx^0 - \xhat}^2 + 4\nu \eta^2 Q' (4L\omega + Q')m \notag\\
&\ - 2\nu \eta (1 - 2\eta L) {\E}_{\mathcal{I}^s|\xtilde} \sum_{t=1}^{m}\left[ F(\bx^{t-1}) - F(\xhat) \right] + 4\nu \eta^2 Lm \left[ F(\xtilde) - F(\xhat) \right] \notag\\
{=}&\ \left[\nu(1 - \eta \alpha) - 1\right]m {\E}_{\mathcal{I}^s, j^s|\xtilde} \twonorm{\xtilde^s - \xhat}^2 + \twonorm{\xtilde - \xhat}^2  + 4\nu \eta^2 Q' (4L\omega + Q')m\notag\\
&\ - 2\nu \eta (1 - 2\eta L)m {\E}_{\mathcal{I}^s, j^s|\xtilde} \left[ F(\xtilde^s) - F(\xhat) \right] + 4\nu \eta^2 Lm \left[ F(\xtilde) - F(\xhat) \right] \notag\\
\leq&\ \left[\nu(1 - \eta \alpha) - 1\right]m {\E}_{\mathcal{I}^s, j^s|\xtilde} \twonorm{\xtilde^s - \xhat}^2 + \(\frac{2}{\alpha} + 4\nu \eta^2 Lm \)\left[ F(\xtilde) - F(\xhat) \right] \notag\\
&\ - 2\nu \eta (1 - 2\eta L)m {\E}_{\mathcal{I}^s, j^s|\xtilde} \left[ F(\xtilde^s) - F(\xhat) \right]  + 4\nu \eta^2 Q' (4L\omega + Q')m + 2Q'\omega / \alpha,
\end{align}
where we recall that $j^s$ is the randomly chosen index used to determine $\xtilde^s$ (see Algorithm~\ref{alg:all}). The last inequality holds due to the RSC condition and $\twonorm{\bx^t} \leq \omega$. For brevity, we write
\begin{align*}
Q := 4\nu \eta^2 Q' (4L\omega + Q')m + 2Q'\omega / \alpha,\quad Q' = \twonorm{\nabla_{3k+K} F(\xhat)}.
\end{align*}
Based on \eqref{eq:chain}, we discuss two cases to examine the convergence of the algorithm.

\vskip 0.2in
\noindent{\bf Case 1.} ${\nu(1-\eta \alpha) \leq 1}$. This immediately results in
\begin{align*}
&\ {\E}_{\mathcal{I}^s|\xtilde} \Big[ \twonorm{\bx^m - \xhat}^2 \Big]\\
\leq&\ \(\frac{2}{\alpha} + 4\nu \eta^2 Lm \)\left[ F(\xtilde) - F(\xhat) \right] - 2\nu \eta (1 - 2\eta L)m\ {\E}_{\mathcal{I}^s, j^s|\xtilde} \left[ F(\xtilde^s) - F(\xhat) \right] + Q,
\end{align*}
which implies
\begin{equation*}
\nu \eta (1 - 2\eta L)m {\E}_{\mathcal{I}^s, j^s|\xtilde} \left[ F(\xtilde^s) - F(\xhat) \right] \leq \(\frac{1}{\alpha} + 2\nu \eta^2 Lm \)\left[ F(\xtilde) - F(\xhat) \right]+ \frac{Q}{2}.
\end{equation*}
Pick $\eta$ such that
\begin{equation}\label{eq:req11}
1 - 2\eta L > 0,
\end{equation}
we obtain
\begin{equation*}
{\E}_{\mathcal{I}^s, j^s|\xtilde} \left[ F(\xtilde^s) - F(\xhat) \right] \leq \( \frac{1}{\nu \eta \alpha (1 - 2\eta L)m} + \frac{2\eta L}{1 - 2\eta L} \) \left[ F(\xtilde) - F(\xhat) \right] + \frac{Q}{2\nu \eta \alpha (1 - 2\eta L)m}.
\end{equation*}
To guarantee the convergence, we must impose
\begin{equation}\label{eq:req12}
\frac{2\eta L}{1 - 2\eta L} < 1.
\end{equation}
Putting \eqref{eq:req11}, \eqref{eq:req12} and $\nu(1-\eta \alpha) \leq 1$ together gives
\begin{equation}\label{eq:case1final}
\eta < \frac{1}{4L},\quad \nu \leq \frac{1}{1-\eta \alpha}.
\end{equation}
The convergence coefficient here is
\begin{equation}\label{eq:case1beta}
\beta = \frac{1}{\nu \eta \alpha (1 - 2\eta L)m} + \frac{2\eta L}{1 - 2\eta L}.
\end{equation}
Thus, we have
\begin{equation*}
\E \left[ F(\xtilde^s) - F(\xhat) \right] \leq \beta^s \left[ F(\xtilde^0) - F(\xhat) \right] +  \frac{Q}{2\nu \eta \alpha (1 - 2\eta L)(1-\beta)m},
\end{equation*}
where the expectation is taken over $\{ \mathcal{I}^1, j^1, \mathcal{I}^2, j^2, \cdots, \mathcal{I}^s, j^s \}$.

\vskip 0.2in
\noindent{\bf Case 2.} ${\nu(1-\eta \alpha) > 1}$. In this case,~\eqref{eq:chain} implies
\begin{align*}
{\E}_{\mathcal{I}^s|\xtilde} \Big[ \twonorm{\bx^m - \xhat}^2 \Big] \leq&\  \(\frac{2}{\alpha} + 4\nu \eta^2 Lm \)\left[ F(\xtilde) - F(\xhat) \right] + Q\\
&\ + \(\frac{2}{\alpha}\left[\nu(1 - \eta \alpha) - 1\right]m - 2\nu \eta (1 - 2\eta L)m \) {\E}_{\mathcal{I}^s, j^s|\xtilde} \left[ F(\xtilde^s) - F(\xhat) \right].
\end{align*}
Rearranging the terms gives
\begin{equation*}
\( 2\nu \eta \alpha - 2 \nu \eta^2 \alpha L - \nu +1 \)m\ {\E}_{\mathcal{I}^s, j^s|\xtilde} \left[ F(\xtilde^s) - F(\xhat) \right] \leq \(1 + 2\nu\eta^2 \alpha L m \)\left[ F(\xtilde) - F(\xhat) \right] + \frac{\alpha Q}{2}.
\end{equation*}
To ensure the convergence, the minimum requirements are
\begin{align*}
2\nu \eta \alpha - 2 \nu \eta^2 \alpha L - \nu +1 >&\ 0,\\
2\nu \eta \alpha - 2 \nu \eta^2 \alpha L - \nu +1 >&\ 2\nu\eta^2 \alpha L.
\end{align*}
That is,
\begin{equation*}
4\nu\alpha L \eta^2 - 2 \nu \alpha \eta + \nu - 1 < 0.
\end{equation*}
We need to guarantee the feasible set of the above inequality is non-empty for the positive variable $\eta$. Thus, we require
\begin{equation*}
4\nu^2 \alpha^2 - 4 \times 4\nu \alpha L (\nu - 1) > 0,
\end{equation*}
which is equivalent to
\begin{equation*}
\nu < \frac{4L}{4L - \alpha}.
\end{equation*}
Combining it with $\nu(1-\eta \alpha) > 1$ gives
\begin{equation*}
\frac{1}{ 1-\eta \alpha} < \nu < \frac{4L}{4L - \alpha}.
\end{equation*}
To ensure the above feasible set is non-empty, we impose
\begin{equation*}
\frac{1}{ 1-\eta \alpha} < \frac{4L}{4L - \alpha},
\end{equation*}
so that
\begin{equation}\label{eq:case2final}
0 < \eta < \frac{1}{4L},\quad \frac{1}{ 1-\eta \alpha} < \nu < \frac{4L}{4L - \alpha}.
\end{equation}
The convergence coefficient for this case is
\begin{equation}\label{eq:case2rate}
\beta = \frac{1}{\(2\nu \eta \alpha - 2 \nu \eta^2 \alpha L - \nu +1\)m} + \frac{2\nu\eta^2 \alpha L}{2\nu \eta \alpha - 2 \nu \eta^2 \alpha L - \nu +1}.
\end{equation}
Thus,
\begin{equation*}
\E \left[ F(\xtilde^s) - F(\xhat) \right] \leq \beta^s \left[ F(\xtilde^0) - F(\xhat) \right] + \frac{\alpha Q}{2(2\nu \eta \alpha - 2\nu \eta^2 \alpha L - \nu + 1)(1 - \beta)m}.
\end{equation*}

By combining \eqref{eq:case1final} and \eqref{eq:case2final}, the minimum requirement for $\eta$ and $\nu$ is
\begin{equation*}
0 < \eta < \frac{1}{4L},\quad \nu < \frac{4L}{4L - \alpha}.
\end{equation*}
The proof is complete.
\end{proof}

\subsection{Proof of Corollary~\ref{coro:param}}\label{sec:app:proofparam}
\begin{proof}
By noting the concavity of the square root function, we have
\begin{align*}
\E\Big[ \sqrt{ \max\{ F(\xtilde^s) - F(\xhat), 0 \} } \Big] &\leq \sqrt{ \E\big[ \max\{ F(\xtilde^s) - F(\xhat), 0 \} \big] }\\
&\leq \sqrt{ \( 2/3\)^s \max\{ F(\xtilde^0) - F(\xhat), 0 \} + \tau(\xhat) }.
\end{align*}
Suppose that $F(\bx)$ satisfies RSS with parameter $L' \in [\alpha, L]$. It follows that
\begin{equation*}
F(\xtilde^0) - F(\xhat) \leq \inner{ \nabla F(\xhat) }{\xtilde^0 - \xhat} + \frac{L'}{2} \twonorm{\xtilde^0 - \xhat}^2 \leq \frac{1}{2L'} \twonorm{\nabla_{k+K} F(\xhat)}^2 + L' \twonorm{\xtilde^0 - \xhat}^2.
\end{equation*}
Recall that
\begin{equation*}
\tau(\xhat) = \frac{5\omega}{\alpha} \twonorm{\nabla_{3k+K} F(\xhat)} + \frac{1}{\alpha L} \twonorm{\nabla_{3k+K} F(\xhat)}^2.
\end{equation*}
Hence using $\sqrt{a + b + c + d} \leq \sqrt{a} + \sqrt{b} + \sqrt{c} + \sqrt{d}$ gives
\begin{align*}
\E\Big[ \sqrt{ \max\{ F(\xtilde^s) - F(\xhat), 0 \} } \Big] \leq&\ \sqrt{L'} \( \frac{2}{3} \)^{\frac{s}{2}} \twonorm{\xtilde^0 - \xhat} + \sqrt{\frac{5\omega}{\alpha} \twonorm{\nabla_{3k+K} F(\xhat)} }\\
&\ + \( \frac{1}{\alpha} + \sqrt{\frac{1}{2\alpha}} \) \twonorm{\nabla_{3k+K} F(\xhat)}.
\end{align*}
Finally, the RSC property immediately suggests that (see, e.g., Lemma 20 in \cite{shen2017partial})
\begin{align*}
\E \big[\twonorm{\xtilde^s - \xhat} \big] \leq&\ \sqrt{\frac{2}{\alpha}} \E \Big[ \sqrt{ { \max\{ F(\xtilde^s) - F(\xhat), 0 \}} } \Big] + \frac{2 \twonorm{\nabla_{k+K} F(\xhat)}}{\alpha}\\
\leq&\ \sqrt{\frac{2L'}{\alpha}} \cdot \( \frac{2}{3} \)^{\frac{s}{2}} \twonorm{\xtilde^0 - \xhat} + \sqrt{\frac{10\omega}{\alpha^2} \twonorm{\nabla_{3k+K} F(\xhat)}} \\
&\ + \( \sqrt{\frac{2}{\alpha^3}}  + \frac{3}{\alpha} \) \twonorm{\nabla_{3k+K} F(\xhat)}.
\end{align*}
The proof is complete.
\end{proof}

\section{HT-SAGA}\label{sec:app:saga}

We demonstrate that the hard thresholding step can be integrated into SAGA~\cite{saga} as shown in Algorithm~\ref{alg:saga}. Note that the only difference of Algorithm~\ref{alg:saga} and the one proposed in~\cite{saga} is that we perform hard thresholding rather than proximal operator. Hence, our algorithm guarantees $k$-sparse solution. 

\begin{algorithm}[h]
	\caption{SAGA with Hard Thresholding~({\sc HT-SAGA})}
	\label{alg:saga}
	\begin{algorithmic}[1]
		\REQUIRE The current iterate $\bx^t$ and of each $\nabla f_i(\bphi_i^t)$ at the end of iteration $t$, the step size $\eta$.
		\ENSURE The new iterate.
		\STATE Pick $j \in \{1, 2, \cdots, n\}$ uniformly at random.
		\STATE Take $\bphi_j^{t+1} = \bx^t$ and store $\nabla f_j(\bphi_j^{t+1})$ in the table. All other entries in the table remain unchanged.
		\STATE Update the new iterate $\bx^{t+1}$ as follows:
		\begin{align*}
		\bb^{t+1} &= \bx^t - \eta \left[ \nabla f_j(\bphi_j^{t+1}) - \nabla f_j(\bphi_j^t) + \frac{1}{n}\sum_{i=1}^{n} \nabla f_i(\bphi_i^t) \right],\\
		\bx^{t+1} &= \Hk{\bb^{t+1}}.
		\end{align*}
	\end{algorithmic}
\end{algorithm}

\begin{theorem}
Assume the same conditions as in~\cite{saga}. Further assume the optimum of~\eqref{eq:primal} without the sparsity constraint happens to be $k$-sparse. Then, the sequence of the solutions produced by Algorithm~\ref{alg:saga} converges to the optimum with geometric rate for some properly chosen sparsity parameter $k$.
\end{theorem}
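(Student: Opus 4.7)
The plan is to mimic the original SAGA convergence analysis of Defazio et al., replacing the single use of the proximal operator's non-expansiveness with our tight hard thresholding bound from Theorem~\ref{thm:key}. Recall that the SAGA proof constructs a Lyapunov function of the form
\begin{align*}
T^{t} = \frac{1}{n}\sum_{i=1}^{n} f_i(\bphi_i^{t}) - F(\hx) - \frac{1}{n}\sum_{i=1}^{n}\inner{\nabla f_i(\hx)}{\bphi_i^{t} - \hx} + c \twonorm{\bx^{t} - \hx}^2,
\end{align*}
and shows that $\E[T^{t+1}] \leq (1-\kappa)\E[T^{t}]$ for some $\kappa > 0$ depending on the step size, the strong convexity parameter, and the smoothness parameter. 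The only place where the proximal operator's properties enter the argument is the step that bounds $\twonorm{\bx^{t+1} - \hx}^2$ by $\twonorm{\bb^{t+1} - \hx}^2$ using firm non-expansiveness.

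The first step of my plan is to observe that since $\hx$ is, by assumption, $K$-sparse for some $K \leq k$ (the ``$k$-sparse'' hypothesis in the statement), Theorem~\ref{thm:key} directly yields
\begin{align*}
\twonorm{\bx^{t+1} - \hx}^2 = \twonorm{\Hk{\bb^{t+1}} - \hx}^2 \leq \nu \twonorm{\bb^{t+1} - \hx}^2,
\end{align*}
with $\nu \in (1, \nu_{\max}]$ depending only on $k, K, d$. I would then carry out the same expansion of $\twonorm{\bb^{t+1} - \hx}^2$ in terms of the unbiased SAGA estimator as in the original proof, take conditional expectation over the index $j$, and apply the RSS assumption to bound the variance-like term by $2L[F(\bx^t) - F(\hx) - \inner{\nabla F(\hx)}{\bx^t - \hx}]$ plus a similar quantity involving the $\bphi_i^t$'s (which is exactly what feeds into the Lyapunov decrease). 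The only change relative to SAGA is the overall factor $\nu$ multiplying the resulting inequality.

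Next I would verify that the recursion $\E[T^{t+1}] \leq (1-\kappa(\nu))\E[T^t]$ still has $\kappa(\nu) > 0$. The original SAGA contraction is $\kappa(1) > 0$ strictly, and $\kappa$ depends continuously on $\nu$; hence there exists $\nu^\star > 1$ such that $\kappa(\nu) > 0$ for all $\nu \in [1, \nu^\star]$. By Theorem~\ref{thm:key}, $\nu$ is a decreasing function of $k$ and satisfies $\nu \to 1$ as $k/K \to \infty$, so choosing $k$ large enough (specifically, $k$ such that $\rho = \min\{K,d-k\}/(k-K+\min\{K,d-k\})$ is small enough that $\nu \leq \nu^\star$) guarantees $\kappa(\nu) > 0$ and hence geometric convergence of $\E[T^t]$, which in turn implies geometric convergence of $\E\twonorm{\bx^t - \hx}^2$.

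The main obstacle I foresee is not the Lyapunov manipulation itself, which is essentially bookkeeping on top of the SAGA proof, but rather the quantitative characterization of $\nu^\star$: we need to re-derive how the admissible step size $\eta$ and the required sparsity inflation $k/K$ trade off against the condition number $L/\alpha$, so as to exhibit concrete parameter choices. This is analogous to the analysis of HT-SVRG in Theorem~\ref{thm:general}, where the threshold $\nu < 4L/(4L-\alpha)$ emerged; I expect a similar sufficient condition of the form $\nu < 1 + \Theta(\alpha/L)$ here, which translates via Theorem~\ref{thm:key} into a lower bound $k \gtrsim (L/\alpha)^2 K$, mirroring Corollary~\ref{cor:main}. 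Everything else reduces to inequalities already present in the SAGA analysis.
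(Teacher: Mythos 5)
Your proposal follows essentially the same route as the paper: the paper likewise takes the SAGA Lyapunov function verbatim, replaces the prox non-expansiveness step with the bound $\twonorm{\bx^{t+1}-\hx}^2 \leq \nu\twonorm{\bb^{t+1}-\hx}^2$ from Theorem~\ref{thm:key}, and then re-derives the parameter constraints so that the contraction survives for $\nu$ sufficiently close to $1$. The only substantive difference is that where you invoke continuity in $\nu$, the paper works out the explicit feasibility conditions, which reveal an additional requirement that $n$ be sufficiently large (roughly $n \gtrsim L^{3/2}/\alpha$) for the admissible range $1 < \nu \leq \frac{(1-\eta\alpha)L}{\eta\alpha(1-\eta\alpha)Ln+1}$ to be non-empty under the SAGA step size $\eta = \frac{1}{2(\alpha n + L)}$ — a condition your continuity argument would only surface once you quantify the slack at $\nu=1$.
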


\begin{proof}
Define the Lyapunov function $Z$ as follows:
\begin{equation*}
Z^t \defeq Z(\bx^t, \{ \bphi_i^t \}) = \frac{1}{n} \sum_{i=1}^{n} f_i(\bphi_i^t) - F(\xhat) - \frac{1}{n} \sum_{i=1}^{n} \inner{\nabla f_i(\xhat)}{\bphi_i^t - \xhat} + c \twonorm{\bx^t - \xhat}^2.
\end{equation*}
We examine $Z^{t+1}$. We have
\begin{align*}
\E \left[ \frac{1}{n} \sum_i f_i(\bphi_i^{t+1}) \right] =&\ \frac{1}{n} F(\bx^t) + \(1-\frac{1}{n}\)\frac{1}{n} \sum_i f_i(\bphi_i^t),\\
\E \left[ -\frac{1}{n} \sum_i \inner{\nabla f_i(\xhat)}{\bphi_i^{t+1}-\xhat} \right] =&\ -\frac{1}{n}\inner{\nabla F(\xhat)}{\bx^t-\xhat} \\
&\ - \(1-\frac{1}{n}\)\frac{1}{n}\sum_i \inner{\nabla f_i(\xhat)}{\bphi_i^t - \xhat}.
\end{align*}
Also,
\begin{align*}
c\twonorm{\bx^{t+1} - \xhat}^2 &\leq c \nu \twonorm{{\bb^{t+1}}-\xhat}^2 = c \nu \twonorm{{\bb^{t+1}}-\xhat + \eta \nabla F(\xhat)}^2.
\end{align*}
For the first term, we have
\begin{align*}
&\ c \nu \E \twonorm{{\bb^{t+1}} - \xhat + \eta  {\nabla F(\xhat) }}^2 \notag\\
\leq&\ c\nu(1-\eta \alpha)\twonorm{\bx^t-\xhat}^2 + c\nu\( (1+\mu)\eta^2 - \frac{\eta}{L} \)\E \twonorm{\nabla f_j(\bx^t) - \nabla f_j(\xhat)}^2\notag\\
&\ - \frac{2c\nu \eta (L-\alpha)}{L} \left[ F(\bx^t) - F(\xhat) - \inner{\nabla F(\xhat)}{\bx^t - \xhat} \right] - c\nu \eta^2 \mu \twonorm{\nabla F(\bx^t) - \nabla F(\xhat)}^2\notag\\
&\ + 2c\nu (1+\mu^{-1}) \eta^2 L \left[ \frac{1}{n}\sum_i f_i(\bphi_i^t) - F(\xhat) - \frac{1}{n}\sum_i \inner{\nabla f_i(\xhat)}{\bphi_i^t - \xhat} \right].
\end{align*}
Therefore,
\begin{align*}
&\ \E[Z^{t+1}] - Z^t \\
\leq&\ -\frac{1}{\kappa} Z^t + \( \frac{1}{n} - \frac{2c\nu\eta(L-\alpha)}{L} - 2c\nu \eta^2\alpha \mu \)\left[ F(\bx^t) - F(\xhat) - \inner{\nabla F(\xhat)}{\bx^t - \xhat} \right] \notag\\
&\ + \( \frac{1}{\kappa} + 2c\nu(1+\mu^{-1})\eta^2L - \frac{1}{n} \)\left[ \frac{1}{n}\sum_i f_i(\bphi_i^t) - F(\xhat) - \frac{1}{n}\sum_i \inner{\nabla f_i(\xhat)}{\bphi_i^t - \xhat} \right] \notag\\
&\ + \(\frac{c}{\kappa} - c\nu\eta \alpha\) \twonorm{\bx^t - \xhat}^2 + \( (1+\mu)\eta - \frac{1}{L} \)c\nu\eta \E \twonorm{\nabla f_j(\bx^t) - \nabla f_j(\xhat)}^2.
\end{align*}
In order to guarantee the convergence, we choose proper values for $\eta$, $c$, $\kappa$, $\mu$ and $\nu$ such that the terms in round brackets are non-positive. That is, we require
\begin{align*}
\frac{c}{\kappa} - c\nu\eta \alpha &\leq 0,\\
(1+\mu)\eta - \frac{1}{L} &\leq 0,\\
\frac{1}{n} - \frac{2c\nu\eta(L-\alpha)}{L} - 2c\nu \eta^2\alpha \mu &\leq 0,\\
\frac{1}{\kappa} + 2c\nu(1+\mu^{-1})\eta^2L - \frac{1}{n} &\leq 0.
\end{align*}
Pick
\begin{align*}
\eta &= \frac{1}{2(\alpha n + L)},\\
\mu &= \frac{2\alpha n + L}{L},\\
\kappa &= \frac{1}{\nu \eta \alpha},
\end{align*}
we fulfill the first two inequalities. Pick
\begin{equation*}
c = \frac{1}{2\eta(1-\eta \alpha)n}.
\end{equation*}
Then by the last two equalities, we require
\begin{equation*}
1-\eta \alpha \leq \nu \leq \frac{(1-\eta \alpha)L}{\eta\alpha(1-\eta\alpha)Ln + 1}.
\end{equation*}
On the other hand, by Theorem~\ref{thm:key}, we have
\begin{equation*}
\nu > 1.
\end{equation*}
Thus, we require
\begin{equation*}
1 < \nu \leq \frac{(1-\eta \alpha)L}{\eta\alpha(1-\eta\alpha)Ln + 1},
\end{equation*}
By algebra, the above inequalities has non-empty feasible set provided that
\begin{equation*}
(6\alpha^2 - 8 \alpha^2L)n^2 + (14\alpha L - \alpha - 16\alpha L^2)n + 8L^2(1-L) < 0.
\end{equation*}
Due to $\alpha \leq L$, we know
\begin{equation*}
n \geq \frac{14L + \sqrt{224L^3+1}}{2\alpha(8L-6)}
\end{equation*}
suffices where we assume $L > 3/4$. Picking
\begin{equation*}
\nu = \frac{(1-\eta \alpha)L}{\eta\alpha(1-\eta\alpha)Ln + 1}
\end{equation*}
completes the proof.
\end{proof}

\bibliographystyle{alpha}
\bibliography{SVRG}
\end{document}